\ifdefined\pdfminorversion\pdfminorversion=7\fi
\documentclass[11pt,a4paper,logo]{lumia}

\usepackage[numbers,sort&compress]{natbib}

\usepackage{algorithm}
\usepackage{algpseudocode}
\usepackage{subcaption}
\usepackage{array}
\usepackage{threeparttable}
\usepackage{multirow}
\usepackage{makecell}
\usepackage{wrapfig}
\usepackage{siunitx}
\usepackage[section]{placeins}
\newtheorem{lemma}{Lemma}
\newtheorem{corollary}{Corollary}
\newtheorem{proposition}{Proposition}
\newtheorem{remark}{Remark}

\usepackage{amsmath,amsfonts,bm}

\def\eqref#1{Equation~\ref{#1}}

\def\1{\bm{1}}

\DeclareMathAlphabet{\mathsfit}{\encodingdefault}{\sfdefault}{m}{sl}
\SetMathAlphabet{\mathsfit}{bold}{\encodingdefault}{\sfdefault}{bx}{n}

\graphicspath{{figures/}}

\providecommand{\bestval}[1]{\textbf{#1}}

\newcommand{\ohrinline}[1]{\,{\scriptsize(#1\%)}}

\definecolor{citecolor}{HTML}{0071bc}
\definecolor{abstrabg}{HTML}{F2D1C5}
\hypersetup{
  colorlinks=true,
  linkcolor=red,
  citecolor=citecolor,
  filecolor=magenta,
  urlcolor=magenta,
  pdftitle={Beyond One-Size-Fits-All: Sample-Adaptive Strategy Routing for Vision Token Pruning in MLLMs},
  pdfauthor={Haiji Liang, Pengfei Zhou, Zhenglin Wan, Yang You, Wei Wang, Wangbo Zhao}
}
\setheadertitle{Beyond One-Size-Fits-All: Sample-Adaptive Strategy Routing for Vision Token Pruning}

\title{Beyond One-Size-Fits-All:\\
Sample-Adaptive Strategy Routing\\
for Vision Token Pruning in MLLMs}

\author{ \parbox{0.94\textwidth}{\centering 
Haiji Liang$^{1,*,\dagger}$
\quad Pengfei Zhou$^{1,2,*,\dagger}$
\quad Zhenglin Wan$^{1}$
\quad Wei Wang$^{3}$
\quad Yang You$^{1,\ddagger}$
\quad Wangbo Zhao$^{3,\ddagger}$\\ 
$^{1}$National University of Singapore \quad
$^{2}$InfRec, Cardinal AI Lab \quad \\
$^{3}$The Hong Kong University of Science and Technology\\ 
\footnotesize $^{*}$Equal contribution \quad $^{\dagger}$ Project leads \quad $^{\ddagger}$Corresponding authors}}

\begin{document}

\begin{abstract}
Multimodal large language models (MLLMs) process hundreds or thousands of visual tokens per image, incurring prohibitive inference costs.
While existing vision token pruning methods mitigate this overhead, they implicitly assume that a single fixed pruning strategy can be applied uniformly across all inputs. 
Our analysis further reveals that ranking pruning methods by average benchmark accuracy conceals substantial sample-wise complementarity: although the average-best strategy excels overall, alternative strategies prove superior on a significant fraction of individual samples. 
To harness this diversity, we propose \textbf{VIP-Router}, a lightweight \textbf{VI}sion \textbf{P}runing \textbf{Router} that adaptively selects the pruning strategy predicted to be best suited to each input at a specified pruning level. 
Conditioned on low-cost visual and textual features, VIP-Router identifies the most suitable candidate strategy while retaining full-token inference as an option when pruning is predicted to be unfavorable.
Evaluated on a curated suite of pruning-sensitive visual perception benchmarks, VTC-Bench Group A, VIP-Router consistently outperforms the best fixed strategy baseline across all reduction ratios, achieving a 26.9\% relative improvement in average accuracy, and a 22.0\% relative increase in average utility after accounting for realized token cost.
Crucially, VIP-Router operates in a plug-and-play manner without modifying underlying pruning algorithms or model weights, introducing trainable parameters equivalent to merely 0.017\% of the backbone.
Furthermore, VIP-Router proves effective across various MLLM backbones and yields consistent gains on unseen benchmarks, highlighting the potential of sample adaptive routing for visual token pruning.

\end{abstract}

\maketitle

\section{Introduction}

Multimodal large language models (MLLMs) have demonstrated impressive capabilities across a broad spectrum of vision-language tasks~\citep{LLaVA-1.5_CVPR2024,LLaVA-OneVision_TMLR2024,Qwen2VL_2024,Qwen2.5VL_2025,InternVL3_2025}. 
However, the high computational cost of processing visual tokens remains a critical bottleneck for their practical deployment: an image can introduce hundreds or even thousands of tokens into the language model, dominating both memory and latency~\citep{EffiMLLM-Survey_VI2025}. 
To alleviate this overhead, vision token pruning has emerged as a mainstream and effective approach~\citep{FastV_ECCV2024,PruMerge_ICCV2025,VisionZip_CVPR2025,DART_EMNLP2025}. 
These methods leverage diverse signals, including attention scores~\citep{FastV_ECCV2024}, token importance~\citep{VisionZip_CVPR2025}, and semantic similarity~\citep{PruMerge_ICCV2025}, to identify and drop redundant tokens.

Yet, VTC-Bench~\citep{VTC-Bench_ACL2026} shows that benchmarks commonly used to evaluate pruning methods were designed for general perception rather than specifically stress-testing vision token pruning.
On the pruning-sensitive VTC-Bench Group~A (samples answered correctly with full tokens but incorrectly after equivalent-ratio downsampling), we observe that no single pruning strategy dominates across all benchmarks: as shown in Figure~\ref{fig:pruning_complementarity} (a), different strategies excel on different tasks, indicating a non-trivial degree of complementarity.
More critically, beyond this benchmark-level divergence, ranking pruning methods solely by average benchmark performance further obscures the complementarity at the sample level.
Among samples for which at least one pruning strategy succeeds, over one-third are cases where Best Fixed (the best fixed pruning method) fails but an alternative strategy answers correctly (Figure~\ref{fig:pruning_complementarity} (b)).
Consequently, the strategy that performs best on average can still be suboptimal for a substantial fraction of individual inputs.
In short, \textbf{average-best does not imply sample-best.}

\begin{figure}[t] 
\centering 
\includegraphics[width=1\linewidth]{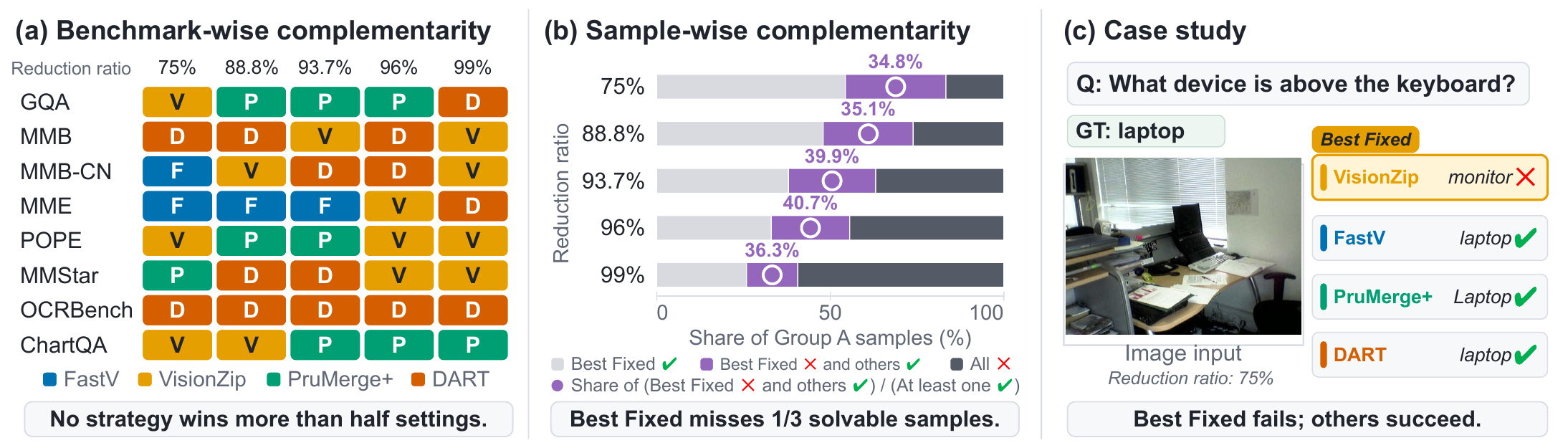}
\caption{(a) The accuracy-best strategy varies across the full VTC-Bench Group A and reduction ratios, with no strategy winning more than half of the 40 benchmark-ratio settings. (b) At each reduction ratio, the stacked bars split Group A samples into those answered correctly by Best Fixed, those missed by Best Fixed but answered correctly by another strategy, and those missed by all strategies. Purple dots report the fraction of samples solvable by at least one strategy that are wrong with Best Fixed, which exceeds one-third at every ratio. (c) In this example, Qwen2-VL yields contrasting predictions across different pruning strategies, illustrating their inherent complementarity.} 
\label{fig:pruning_complementarity} 
\end{figure}

Therefore, rather than designing yet another heuristic to chase higher average performance, we treat this complementarity as an untapped source of improvement and exploit it through sample-adaptive selection.
The primary challenge is that the sample-best choice is defined by the inference outcome, which is inherently unavailable beforehand; identifying it exactly would require executing multiple candidate strategies and comparing their respective outputs, thereby defeating the very computational savings that pruning is intended to provide.
Yet, intuitively, the suitability of a pruning strategy should depend on the visual and linguistic characteristics of the current input.
Our key finding is that low-cost visual and textual features already provide informative signal to predict strategy suitability.

To this end, we propose \textbf{VI}sion \textbf{P}runing \textbf{Router} (\textbf{VIP-Router}), a lightweight, plug-and-play module compatible with existing pruning methods.
VIP-Router constructs a query-conditioned preview representation from visual and textual features using text-to-vision cross-attention, and uses a ratio-conditioned utility predictor to score each candidate option.
At inference time, only the highest-scoring option is executed, with full-token inference included as an explicit candidate when pruning is predicted to be unfavorable.

In our evaluations, VIP-Router consistently outperforms the best fixed pruning strategies across all five reduction ratios on VTC-Bench Group~A.
Accounting for its realized token cost, VIP-Router improves average utility from \(30.88\) to \(37.68\), corresponding to a \(22.0\%\) relative improvement over Best Fixed; meanwhile, average accuracy increases from \(40.35\%\) to \(51.19\%\), a \(26.9\%\) relative gain.
These gains require no modification to either the candidate pruning methods or the underlying MLLM computation pipeline, while VIP-Router adds only about 1.4M trainable parameters, approximately \(0.017\%\) of the MLLM parameters.

Our main contributions are summarized as follows:
\begin{itemize}

\item We characterize the sample-level complementarity among existing vision token pruning strategies on pruning-sensitive benchmarks, revealing substantial per-sample suboptimality masked by standard aggregate protocols.

\item We introduce \textbf{VIP-Router}, a plug-and-play router that uses low-cost features to predict per-strategy utility conditioned on the pruning level while retaining full-token inference as an option when pruning is unfavorable.

\item Extensive experiments establish consistent cost-aware accuracy gains across pruning levels, applicability across different MLLM backbones, and positive zero-shot transfer to unseen benchmarks.

\end{itemize}

\section{Related Work}

\subsection{Vision Token Pruning for MLLMs} 

Vision token pruning is a widely studied approach to improving MLLM efficiency by removing or consolidating redundant visual tokens before or within the language model. Existing methods primarily differ in the criterion used to identify which visual information should be retained. 
Attention-based methods such as FastV and PruMerge exploit attention signals from the language model or visual encoder to identify salient tokens~\citep{FastV_ECCV2024,PruMerge_ICCV2025}, while FitPrune~\citep{FitPrune_AAAI2025} derives pruning configurations by matching attention statistics before and after pruning. Other methods explicitly target redundancy among visual representations: VisionZip~\citep{VisionZip_CVPR2025} preserves dominant tokens while compressing contextual ones, DivPrune~\citep{DivPrune_CVPR2025} promotes diversity among retained tokens, and DART~\citep{DART_EMNLP2025} removes tokens according to their duplication with representative pivots. Recent work has also examined failure modes of pruning; for example, RVIS~\citep{RVIS_2026} adapts pruning during decoding in response to changes in task-relevant visual information. 

Despite these differences, existing approaches fundamentally rely on a single fixed pruning criterion for all inputs. VIP-Router instead treats existing pruning strategies as complementary candidates and performs sample-level selection among them. 

\subsection{Adaptive Inference for Efficient MLLMs} 

Adaptive inference methods adjust MLLM computation according to the input or runtime efficiency requirements rather than using a single fixed inference configuration~\citep{agentrouter_2026}. One line of work adapts the amount or location of token computation. 
Dynamic-LLaVA~\citep{Dynamic-LLaVA_ICLR2025} sparsifies visual and linguistic contexts during prefill and decoding; ATP-LLaVA and SparseVLM~\citep{ATP-LLaVA_CVPR2025,SparseVLM_ICML2025} determine input- or layer-dependent visual-token retention ratios; and AIM~\citep{AIM_ICCV2025} combines pre-LLM merging with progressive in-LLM pruning to support different efficiency requirements. F$^3$A~\citep{F3A_2026} further uses question-conditioned signals to adapt visual-token allocation at a specified pruning level. Another line adapts the visual representation or compression pathway. M$^3$~\citep{M3_ICLR2024} constructs nested representations at multiple visual-token granularities, while MQT~\citep{MQT_NIPS2024} produces different numbers of visual tokens with a shared query transformer. AdaLLaVA~\citep{AdaLLaVA_ICCV2025} operates at a broader structural level by dynamically reconfiguring MLLM computation under runtime latency constraints. Most closely related to VIP-Router, QMoP~\citep{QMoP_2026} uses a query-guided router over pooling-, resampler-, and pruning-based compression branches. However, QMoP jointly learns specialized compression branches and fuses their output representations, potentially with minor performance degradation, whereas VIP-Router leaves existing pruning operators unchanged and makes a discrete selection among off-the-shelf strategies. 

Overall, prior adaptive methods primarily vary the amount or location of computation, the granularity of visual representations, or the compression pathway. VIP-Router operates on a different adaptation axis: which pruning criterion to apply to each sample at a specified pruning level.

\section{Vision Pruning Router}
\label{sec:method}

\subsection{Overview} 
\label{sec:method_overview} 

VIP-Router implements sample-level pruning strategy selection through three components: frozen visual and textual preview encoders, a text-to-vision cross-attention module, and a ratio-conditioned MLP utility predictor.
As illustrated in Figure~\ref{fig:framework}, the lightweight encoders provide global and token-level representations, cross-attention extracts query-conditioned visual information, and the MLP jointly estimates the utility of all candidate strategies at the given pruning ratio.
The following sections (\ref{sec:problem formulation}-\ref{sec:utility_routing}) detail the routing objective, preview representation, and utility predictor.

\begin{figure}[t] 
\centering 
\includegraphics[width=1\linewidth]{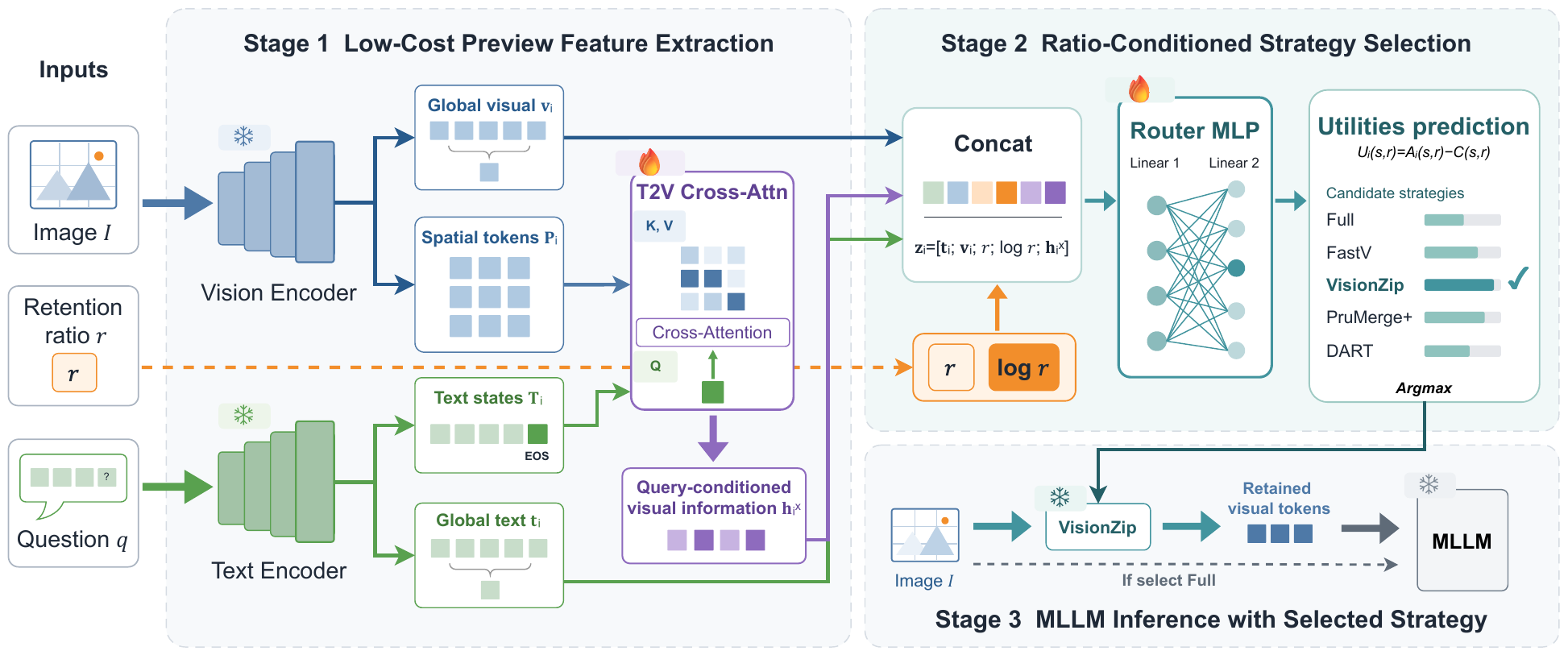}
\caption{\textbf{Overview of VIP-Router.} Frozen visual and textual preview features, together with query-conditioned cross-modal features and the retained-token ratio, are used to predict the utility of each candidate pruning strategy and Full.} 
\label{fig:framework} 
\end{figure} 

\subsection{Problem Formulation}
\label{sec:problem formulation}

We follow the Group~A setting of VTC-Bench~\citep{VTC-Bench_ACL2026}, where full-token inference is correct and the equivalent-ratio image-downsampling baseline fails.
Given a sample \(x_i=(I_i,q_i)\), we select a strategy from
\(\mathcal{S}=\{\mathrm{Full},\mathrm{FastV},\mathrm{VisionZip},
\mathrm{PruMerge+},\mathrm{DART}\}\).
The candidate \(\mathrm{Full}\) retains all visual tokens, while the remaining candidates apply existing pruning rules.

Let \(r\in(0,1]\) denote the retained-token ratio used by the compressed candidates; the corresponding reduction ratio is \(1-r\).
Here, \(r\) specifies the target pruning operating point rather than a hard per-sample token constraint, since VIP-Router may select \(\mathrm{Full}\) when pruning is unfavorable.
For strategy \(s\), let \(a_i^s\in\{0,1\}\) indicate whether the MLLM answers sample \(i\) correctly.

We define
\begin{equation}
U_i(s,r) = a_i^s-c(s,r), \qquad
c(s,r) =
\begin{cases}
1, & s=\mathrm{Full},\\
r, & s\neq\mathrm{Full}.
\end{cases}
\label{eq:utility}
\end{equation}
Under Group~A, this objective ranks a correctness-preserving compressed strategy above \(\mathrm{Full}\), and \(\mathrm{Full}\) above an incorrect compressed strategy, thereby providing a cost-aware no-pruning option.

The oracle sample-level decision is
\begin{equation}
s_i^\star = \arg\max_{s\in\mathcal{S}} U_i(s,r).
\label{eq:oracle}
\end{equation}
At test time, these utilities are unknown.
We therefore learn a router \(g_\theta\) that jointly predicts the utility of all candidates:
\begin{equation}
\hat{\mathbf{u}}_i = g_\theta(I_i,q_i,r),
\qquad
\hat{s}_i = \arg\max_{s\in\mathcal{S}} \hat{u}_{i,s}.
\label{eq:routing}
\end{equation}

\subsection{Cross-Modal Preview Representation} 
\label{sec:preview_representation} 

We instantiate the preview representation using frozen CLIP visual and text encoders.
For image \(I_i\), we extract penultimate-layer visual representations, yielding spatial patch features \(\mathbf{P}_i\), whose mean gives the global visual feature \(\mathbf{v}_i\).
For question \(q_i\), the text encoder produces token representations \(\mathbf{T}_i\) and a global text representation \(\mathbf{t}_i\).

While \(\mathbf{v}_i\) and \(\mathbf{t}_i\) provide low-cost global summaries, direct concatenation captures only coarse cross-modal interaction.
We therefore introduce a lightweight text-to-vision cross-attention module.
After projecting both modalities to a common dimension, the text tokens query the spatial visual features:
\begin{equation}
\mathbf{H}_i =
\operatorname{MHA}
\left(
Q=\mathbf{T}_i\mathbf{W}_T,\,
K=\mathbf{P}_i\mathbf{W}_P,\,
V=\mathbf{P}_i\mathbf{W}_P
\right).
\label{eq:cross_attention}
\end{equation}
We take the cross-attended EOS representation
\(\mathbf{h}_i^{\times}=\mathbf{H}_{i,\mathrm{EOS}}\)
as a compact query-conditioned visual summary.
The resulting representation therefore captures global question semantics,
global visual content, and query-conditioned visual information.

\subsection{Ratio-Conditioned Utility Routing} 
\label{sec:utility_routing} 

Training a separate router for each retained-token ratio would cause the number of routers to scale linearly with the number of supported ratios.
We instead share a single router across all ratios and explicitly condition it on \(r\).
The final router representation is
\begin{equation}
\mathbf{z}_i =
\left[
\mathbf{t}_i;
\mathbf{v}_i;
r;
\log r;
\mathbf{h}_i^{\times}
\right].
\label{eq:router_representation}
\end{equation}

A two-layer MLP maps \(\mathbf{z}_i\) to a utility estimate for each candidate strategy.
Predicting utility rather than correctness directly aligns the routing decision with the cost-aware objective in ~\eqref{eq:utility}, providing a common score for ranking candidates across retained-token ratios.

Training uses offline per-strategy correctness labels.
For each available sample--ratio pair, these labels are converted into the utility target
\[
\mathbf{U}_i =
[U_i(s_1,r),\ldots,U_i(s_K,r)]^\top.
\]
For stable optimization, we standardize each candidate's utility targets using training-set statistics and regress the resulting targets \(\widetilde{\mathbf U}_i\); predictions are transformed back to raw utility before strategy selection.
The complete standardization procedure is provided in Appendix~\ref{app:router_training_details}.
We optimize the router using an element-wise Huber loss:
\begin{equation}
\mathcal{L}(\theta)
=
\frac{1}{NK}
\sum_{i=1}^{N}
\sum_{k=1}^{K}
\ell_{\mathrm{Huber}}
\left(
\hat{\tilde{u}}_{i,k},
\widetilde{U}_{i,k}
\right).
\label{eq:training_loss}
\end{equation}

Only the feature projections, cross-attention module, and utility predictor are updated; the preview encoders and MLLM remain frozen.
At inference time, VIP-Router predicts candidate utilities at the specified retained-token ratio, selects the highest-utility candidate, and applies the selected strategy within the unchanged MLLM in a single forward pass.

\section{Experiments}
\label{sec:experiments}

\subsection{Experimental Setup}
\label{sec:experimental_setup}

\paragraph{Backbone and candidate pruning strategies.}
Following VTC-Bench~\citep{VTC-Bench_ACL2026}, we use Qwen2-VL-7B~\citep{Qwen2VL_2024} as the backbone MLLM in our main experiments.
We adopt the four pruning implementations released by VTC-Bench: FastV~\citep{FastV_ECCV2024}, VisionZip~\citep{VisionZip_CVPR2025}, PruMerge+~\citep{PruMerge_ICCV2025}, and DART~\citep{DART_EMNLP2025}.
All implementations and pruning hyperparameters are used without modification.
Full-token inference is included as an additional candidate for VIP-Router, as defined in Section~\ref{sec:problem formulation}.

\paragraph{Data construction and splits.}
VTC-Bench~\citep{VTC-Bench_ACL2026} constructs its pruning-sensitive Group~A evaluation set from eight established benchmarks: GQA~\citep{GQA_CVPR2019}, MMBench and MMBench-CN~\citep{MMBench_ECCV2024}, MME~\citep{MME_2024}, POPE~\citep{POPE_EMNLP2023}, MMStar~\citep{MMStar_NIPS2024}, OCRBench~\citep{OCRBench_SCIS2024}, and ChartQA~\citep{ChartQA_ACL2022}.
For each reduction ratio, Group~A contains samples that are answered correctly with full-token inference but incorrectly after equivalent-ratio image downsampling.
We construct the routing dataset from the correctness labels released by VTC-Bench and partition unique image--question records into training, validation, and test sets with a \(70/15/15\) split.
All ratio-specific examples derived from the same image--question record remain in the same split.
For each available sample--ratio example in the training set, the correctness vector over candidate strategies is converted into the utility target defined in~\eqref{eq:utility}.
The same split is used throughout all experiments, and the main results are reported on the held-out test set.

\paragraph{Baselines and evaluation metrics.}
We compare VIP-Router with each of the four pruning strategies applied uniformly to all test samples.
We additionally report \emph{Best Fixed}, which selects a single pruning strategy on the training split by maximizing average utility jointly over all training samples and reduction ratios, and then applies the same strategy to all test samples and ratios.
The \emph{Per-Sample Oracle} selects the highest-utility strategy using ground-truth test labels and serves only as an unattainable upper bound.
We report Accuracy, normalized Token Cost, Utility, and Oracle Headroom Recovery (OHR).
Accuracy measures the fraction of correctly answered Group~A samples.
Token Cost is the realized average fraction of retained visual tokens after strategy selection; because VIP-Router may select Full, its realized Token Cost can exceed the retained-token ratio of the compressed candidates.
Utility averages the sample-level objective in ~\eqref{eq:utility}.
OHR measures the fraction of the available sample-level utility headroom recovered by VIP-Router:
\begin{equation}
\mathrm{OHR}
=
\frac{
U_{\mathrm{VIP\text{-}Router}} - U_{\mathrm{best\text{-}fixed}}
}{
U_{\mathrm{oracle}} - U_{\mathrm{best\text{-}fixed}}
}
\times 100\%.
\label{eq:ohr}
\end{equation}
Higher OHR indicates that VIP-Router recovers a larger fraction of the utility gap between Best Fixed and the Per-Sample Oracle.

\paragraph{Router training.}
Unless otherwise stated, we use the CLIP-based router configuration described in Section~\ref{sec:preview_representation}.
Only the feature projections, cross-attention layer, and utility prediction head are optimized; the backbone MLLM and preview encoders remain frozen.
We train VIP-Router using AdamW with a learning rate of \(10^{-3}\), a batch size of 256, and at most 50 epochs, and select the checkpoint with the highest validation macro utility.
Additional optimization, implementation, hardware, and runtime details are provided in Appendix~\ref{app:imple and repro details}.

\begin{table*}[t]
\centering
\caption{
Cost-aware comparison of Best Fixed, VIP-Router, and the Per-Sample Oracle across reduction ratios on VTC-Bench Group~A test set.
VIP-Router results are averaged over five random seeds, whereas Best Fixed and the Per-Sample Oracle are deterministic.
Best Fixed uses a single pruning strategy selected jointly over all source-training samples and ratios.
For Best Fixed, Token Cost equals the exact retained-token ratio \(r\).
VIP-Router may select Full, and therefore its realized Token Cost can exceed \(r\).
Accuracy and utility are multiplied by 100 for reporting; token cost remains on the [0, 1] scale.
The best pruning result at each ratio is bolded.
}
\label{tab:summary_best_fixed_viprouter_oracle}
\footnotesize
\setlength{\tabcolsep}{4.8pt}
\renewcommand{\arraystretch}{1.12}

\begin{tabular*}{\textwidth}{
@{\extracolsep{\fill}}
c
!{\color{black!35}\vrule width 0.4pt}
c c c
!{\color{black!35}\vrule width 0.4pt}
c c c
!{\color{black!35}\vrule width 0.4pt}
c c c
!{\color{black!35}\vrule width 0.4pt}
c
@{\hspace{8pt}}
}
\toprule
\multirow{2}{*}{\textbf{Reduction Ratio}}
& \multicolumn{3}{c}{\textcolor{black!45}{\textbf{Oracle}}}
& \multicolumn{3}{c}{
    \begin{tabular}[c]{@{}c@{}}
        Best Fixed\\[-2pt]
        {\scriptsize\textit{VisionZip}}
    \end{tabular}
}
& \multicolumn{3}{c!{\color{black!35}\vrule width 0.4pt}}{\textbf{VIP-Router}}
& \multirow{2}{*}{\textbf{OHR}} \\
\cmidrule(lr){2-4}
\cmidrule(lr){5-7}
\cmidrule(lr){8-10}
& \textbf{Acc.}
& \textbf{Cost}
& \textbf{Utility}
& \textbf{Acc.}
& \textbf{Cost}
& \textbf{Utility}
& \textbf{Acc.}
& \textbf{Cost}
& \textbf{Utility}
& \\
\midrule

75.00\%
& \textcolor{black!45}{100.00}
& \textcolor{black!45}{0.351}
& \textcolor{black!45}{64.91}
& 57.54 & 0.250 & 32.54
& 66.91 & 0.287 & \textbf{38.23}
& 17.58\% \\

88.89\%
& \textcolor{black!45}{100.00}
& \textcolor{black!45}{0.353}
& \textcolor{black!45}{64.70}
& 47.94 & 0.111 & 36.83
& 57.65 & 0.144 & \textbf{43.24}
& 23.00\% \\

93.75\%
& \textcolor{black!45}{100.00}
& \textcolor{black!45}{0.390}
& \textcolor{black!45}{61.02}
& 37.42 & 0.063 & 31.17
& 49.62 & 0.095 & \textbf{40.08}
& 29.85\% \\

96.00\%
& \textcolor{black!45}{100.00}
& \textcolor{black!45}{0.460}
& \textcolor{black!45}{53.96}
& 33.05 & 0.040 & 29.05
& 44.29 & 0.073 & \textbf{36.95}
& 31.71\% \\

99.00\%
& \textcolor{black!45}{100.00}
& \textcolor{black!45}{0.586}
& \textcolor{black!45}{41.40}
& 25.82 & 0.010 & 24.82
& 37.48 & 0.076 & \textbf{29.89}
& 30.58\% \\

\bottomrule
\end{tabular*}
\end{table*}

\begin{table*}[t]
\centering
\caption{
Average accuracy (\%) on VTC-Bench Group~A test set across visual-token reduction
ratios.
Full has \(100\%\) accuracy by the construction of Group~A and is shown only
as a reference.
VIP-Router results are averaged over five random seeds, whereas fixed-strategy results are deterministic.
}
\label{tab:avg_acc_across_ratios}
\footnotesize
\setlength{\tabcolsep}{7.5pt}
\renewcommand{\arraystretch}{1.12}

\begin{tabular*}{\textwidth}{
@{\extracolsep{\fill}}
l
!{\color{black!35}\vrule width 0.4pt}
c c c c c
@{\hspace{8pt}}
}
\toprule
\textbf{Method}
& \textbf{75.00\%}
& \textbf{88.89\%}
& \textbf{93.75\%}
& \textbf{96.00\%}
& \textbf{99.00\%} \\
\midrule

\textcolor{black!45}{Full}
& \textcolor{black!45}{100.00}
& \textcolor{black!45}{100.00}
& \textcolor{black!45}{100.00}
& \textcolor{black!45}{100.00}
& \textcolor{black!45}{100.00} \\

\midrule
FastV
& 54.99 & 35.33 & 29.22 & 23.92 & 18.89 \\

VisionZip
& 57.54 & 47.94 & 37.42 & 33.05 & 25.82 \\

PruMerge\textsuperscript{+}
& 57.31 & 43.20 & 39.73 & 32.12 & 21.98 \\

DART
& 55.68 & 42.07 & 36.26 & 34.91 & 24.18 \\

\textbf{VIP-Router}
& \textbf{66.91}
& \textbf{57.65}
& \textbf{49.62}
& \textbf{44.29}
& \textbf{37.48} \\

\bottomrule
\end{tabular*}
\end{table*}

\subsection{Main Results}
\label{sec:main_results}

\paragraph{Cost-aware performance.}
Table~\ref{tab:summary_best_fixed_viprouter_oracle} shows that VIP-Router consistently improves over Best Fixed even after accounting for its realized visual-token cost.
Averaged across the five reduction ratios, utility increases from 30.88 to 37.68, corresponding to a 22.0\% relative improvement.
The gain holds at every reduction ratio, ranging from 5.07 to 8.91 utility points over Best Fixed.
As a stronger ratio-wise reference, the strongest fixed method itself varies with the pruning level:
PruMerge\textsuperscript{+} achieves the highest fixed-strategy utility at \(93.75\%\) reduction and DART at \(96.00\%\), while VisionZip leads at the remaining ratios.
VIP-Router nevertheless exceeds this fixed-method envelope at every ratio.
Relative to the Per-Sample Oracle, VIP-Router recovers 17.6--31.7\% of the available utility headroom across the five ratios, with a macro-average OHR of 26.5\%.
The improvement is also not primarily attributable to full-token inference.
Although VIP-Router may select Full when pruning is predicted to be unfavorable, its realized Token Cost is explicitly charged in utility, and Full accounts for only 3.48--6.66\% of routing decisions across the five ratios (Figure~\ref{fig:strategy_selection_distribution} in Appendix~\ref{appendix}).
Moreover, a worst-case counterfactual bounds the contribution of the full-token option to at most 0.41 utility points on average, or 6.1\% of the observed utility gain over Best Fixed (Remark~\ref{rem:full_fallback_bound}).
Thus, the improvement arises predominantly from sample-adaptive selection among pruning strategies rather than from additional use of full-token inference.

\paragraph{Accuracy across reduction ratios and benchmarks.}
VIP-Router also achieves the highest average accuracy at every reduction ratio, as shown in Table~\ref{tab:avg_acc_across_ratios}.
Compared with Best Fixed, it improves accuracy by at least 9.37 points at each ratio and increases the average accuracy across the five ratios from 40.35\% to 51.19\%, corresponding to a 26.9\% relative improvement.
The relative gain becomes particularly pronounced under aggressive pruning, reaching 45.2\% at \(99\%\) token reduction.
The improvement is also broadly distributed across tasks: VIP-Router achieves the best pruning result in 30 of the 40 benchmark--ratio combinations (75.0\%); the full per-benchmark breakdown is provided in Appendix~\ref{app:detailed_results}.

\subsection{Efficiency Analysis}
\label{sec:efficiency_analysis}

\begin{figure*}[t]
    \centering
    \begin{subfigure}[t]{0.485\textwidth}
        \centering
        \includegraphics[width=\linewidth]{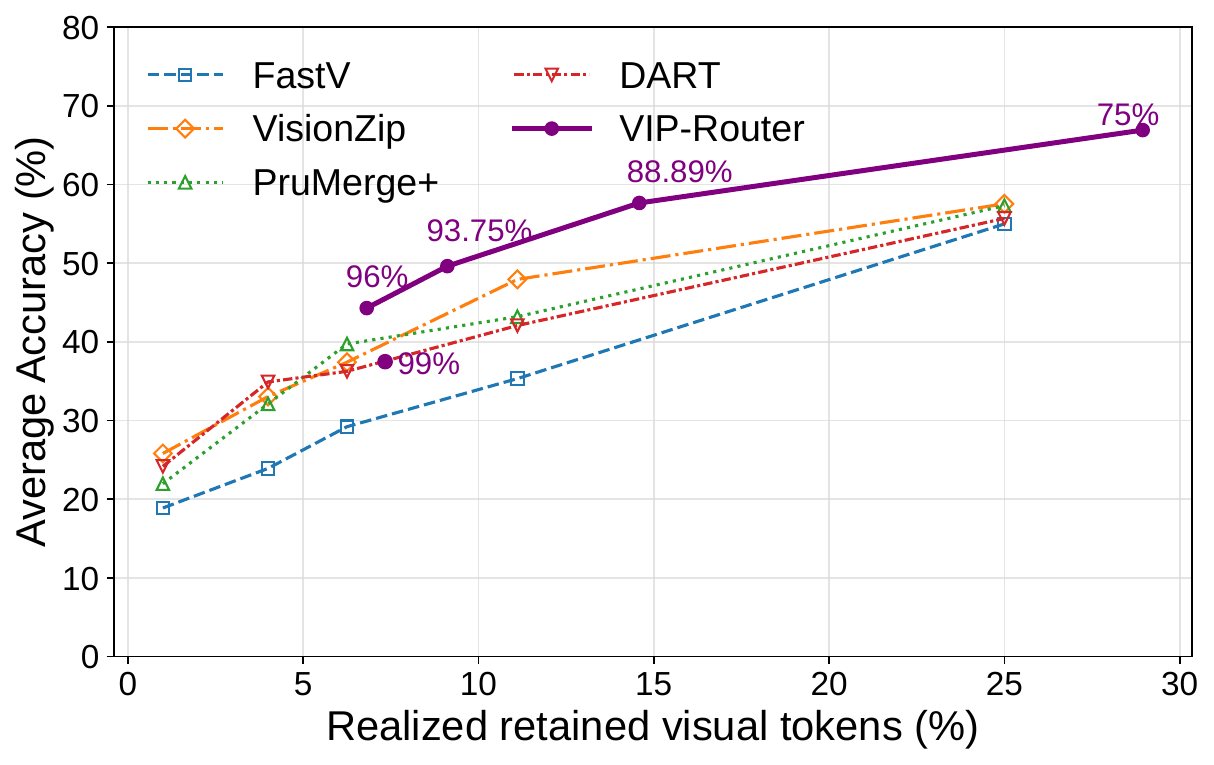}
        \caption{Accuracy versus realized token cost.}
        \label{fig:Token Budget-Acc.}
    \end{subfigure}
    \hfill
    \begin{subfigure}[t]{0.485\textwidth}
    \centering
        \includegraphics[width=\linewidth]{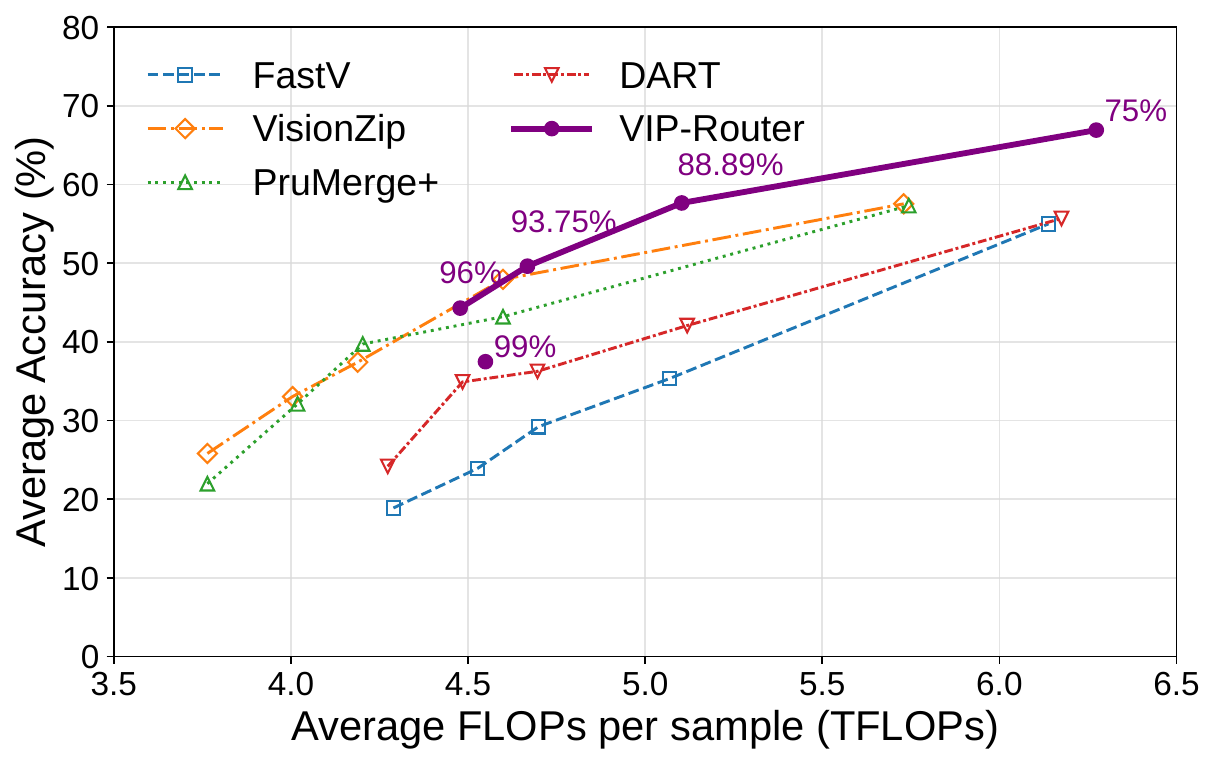}
        \caption{Accuracy versus total inference FLOPs.}
        \label{fig:FLOPs-Acc.}
    \end{subfigure}    
    \caption{
        Accuracy--efficiency comparison between fixed pruning strategies and VIP-Router. Point labels indicate prescribed reduction ratios.
        The total inference FLOPs include both routing overhead and the computation of the selected pruning strategy.}
    \label{fig:Accuracy_efficiency}

\end{figure*}

\begin{table*}[t]
    \centering    
    \caption{
Parameter overhead and performance of the default VIP-Router and its native-feature variants.
Extra frozen parameters include modules loaded exclusively for routing beyond Qwen2-VL; reused native MLLM components are excluded.
Percentages are relative to Qwen2-VL parameters.
All variants are single-run results.
}
    \label{tab:router_param_efficiency}
    \setlength{\tabcolsep}{7.5pt}
    \renewcommand{\arraystretch}{1.10}
    \resizebox{\textwidth}{!}{%
    \begin{tabular}{llccccc}
        \toprule
        Vision Features 
        & Text Features
        & Cross Attn.
        & \# Extra Frozen Params
        & \# Trainable Params
        & Acc.
        & Utility \\
        \midrule

        CLIP ViT-B/32
        & CLIP Text
        & Yes
        & 151.28\,M {\scriptsize (1.82\%)}
        & 1.38\,M {\scriptsize (0.017\%)}
        & 51.51
        & \textbf{38.09} \\

        Native ViT
        & CLIP Text
        & Yes
        & 63.43\,M {\scriptsize (0.77\%)}
        & 1.78\,M {\scriptsize (0.021\%)}
        & 51.57
        & 37.98 \\

        Native ViT
        & Native LM
        & Yes
        & 0.00\,M {\scriptsize (0.00\%)}
        & 4.13\,M {\scriptsize (0.050\%)}
        & 51.86
        & 37.37 \\

        Native ViT
        & CLIP Text
        & No
        & 63.43\,M {\scriptsize (0.77\%)}
        & \textbf{0.92\,M} {\scriptsize (0.011\%)}
        & 50.47
        & 37.19 \\

        Native ViT
        & Native LM
        & No
        & 0.00\,M {\scriptsize (0.00\%)}
        & 2.49\,M {\scriptsize (0.030\%)}
        & 52.21
        & 37.62 \\

        \bottomrule
    \end{tabular}%
    }
\end{table*}

\paragraph{Accuracy–efficiency trade-off.}
Figure~\ref{fig:Accuracy_efficiency} compares VIP-Router with fixed pruning strategies from two complementary perspectives: prescribed pruning level and realized computation.
As shown in Figure~\ref{fig:Token Budget-Acc.}, VIP-Router achieves higher accuracy than the fixed strategies across most of the evaluated reduction range.
Figure~\ref{fig:FLOPs-Acc.} further accounts for the FLOPs of both routing and the selected pruning method; VIP-Router remains above the fixed-strategy curves over most of the evaluated compute range beyond approximately \(4.5\) TFLOPs.
Together, these results show that the accuracy gains of VIP-Router persist after accounting for its additional routing computation.
The most aggressive \(99\%\) reduction setting behaves differently.
At this pruning level, VIP-Router more frequently selects the full-token option for samples predicted to be unfavorable for pruning, increasing its realized Token Cost and FLOPs.
Consequently, the corresponding point becomes less competitive under matched-compute comparison, even though VIP-Router still substantially improves accuracy over fixed strategies evaluated at the same prescribed reduction ratio.

\paragraph{Router parameter overhead.}
The default VIP-Router achieves the highest utility while introducing only 1.38M trainable parameters (0.017\% of the 8.291B Qwen2-VL \citep{Qwen2VL_2024}).
Although this configuration uses frozen CLIP encoders \citep{CLIP_ICML2021} for preview features, these external parameters are not intrinsic to the routing formulation: reusing native MLLM representations progressively reduces the additional frozen footprint and can eliminate it entirely.
In particular, the fully native variant without cross-attention requires no external frozen parameters and only 2.49M trainable parameters, while its utility is within 0.5 points of the default configuration.
Thus, VIP-Router can trade external preview capacity for native feature reuse while keeping its trainable parameter overhead at or below 0.05\% of the backbone.

\subsection{Cross-Backbone Evaluation and Benchmark Transfer}
\label{sec:generalization}

\begin{table*}[t]
\centering
\caption{
Evaluation of VIP-Router across MLLM backbones and zero-shot transfer to unseen benchmarks.
In panel (a), the pruning strategy selected by Best Fixed is shown after each MLLM name.
The constant Best Fixed Cost of \(0.095\) is the macro-average retained-token ratio over the five evaluated pruning levels (\(\bar r=0.0947\)).
Parenthesized percentages report OHR, which is also macro-averaged across the five reduction ratios.
}
\label{tab:generalization_results}

\begin{minipage}[t]{0.475\textwidth}
\centering
\scriptsize
\setlength{\tabcolsep}{2.6pt}
\renewcommand{\arraystretch}{1.05}

\begin{tabularx}{\linewidth}{
@{}
X
>{\centering\arraybackslash}p{0.12\linewidth}
>{\centering\arraybackslash}p{0.14\linewidth}
>{\centering\arraybackslash}p{0.30\linewidth}
@{}
}
\toprule
\textbf{Method} & \textbf{Acc.} & \textbf{Cost} & \textbf{Utility (OHR)} \\
\midrule
\rowcolor{blue!7}
\multicolumn{4}{@{}l}{\textit{LLaVA-1.5-7B (PruMerge\textsuperscript{+})}} \\
Oracle & 100.00 & 0.458 & 54.23 \\
Best Fixed & 43.38 & 0.095 & 33.91 \\
\textbf{VIP-Router} & 51.82 & 0.128 & \bestval{38.97}\ohrinline{26.15} \\

\midrule
\rowcolor{blue!7}
\multicolumn{4}{@{}l}{\textit{LLaVA-OneVision-7B (DART)}} \\
Oracle & 100.00 & 0.431 & 56.95 \\
Best Fixed & 45.22 & 0.095 & 35.74 \\
\textbf{VIP-Router} & 48.24 & 0.117 & \bestval{36.56}\ohrinline{2.93} \\

\midrule
\rowcolor{blue!7}
\multicolumn{4}{@{}l}{\textit{InternVL3-8B (DART)}} \\
Oracle & 100.00 & 0.385 & 61.46 \\
Best Fixed & 50.93 & 0.095 & 41.46 \\
\textbf{VIP-Router} & 53.81 & 0.118 & \bestval{42.01}\ohrinline{4.17} \\

\midrule
\rowcolor{blue!7}
\multicolumn{4}{@{}l}{\textit{Qwen2.5-VL-7B (DART)}} \\
Oracle & 100.00 & 0.450 & 54.96 \\
Best Fixed & 40.01 & 0.095 & 30.54 \\
\textbf{VIP-Router} & 48.75 & 0.134 & \bestval{35.40}\ohrinline{20.52} \\
\bottomrule
\end{tabularx}

{\normalsize\textbf{(a) Evaluation across MLLM backbones.}}
\end{minipage}
\hfill
\begin{minipage}[t]{0.475\textwidth}
\centering
\scriptsize
\setlength{\tabcolsep}{2.6pt}
\renewcommand{\arraystretch}{1.05}

\begin{tabularx}{\linewidth}{
@{}
X
>{\centering\arraybackslash}p{0.12\linewidth}
>{\centering\arraybackslash}p{0.14\linewidth}
>{\centering\arraybackslash}p{0.30\linewidth}
@{}
}
\toprule
\textbf{Method} & \textbf{Acc.} & \textbf{Cost} & \textbf{Utility (OHR)} \\
\midrule
\rowcolor{blue!7}
\multicolumn{4}{@{}l}{\textit{ScienceQA-IMG}} \\
Oracle & 100.00 & 0.518 & 48.16 \\
Best Fixed & 29.59 & 0.095 & 20.11 \\
\textbf{VIP-Router} & 36.18 & 0.106 & \bestval{25.54}\ohrinline{19.47} \\

\midrule
\rowcolor{blue!7}
\multicolumn{4}{@{}l}{\textit{MMMU}} \\
Oracle & 100.00 & 0.533 & 46.71 \\
Best Fixed & 31.99 & 0.095 & 22.52 \\
\textbf{VIP-Router} & 35.22 & 0.106 & \bestval{24.59}\ohrinline{6.53} \\

\midrule
\rowcolor{blue!7}
\multicolumn{4}{@{}l}{\textit{RealWorldQA}} \\
Oracle & 100.00 & 0.373 & 62.67 \\
Best Fixed & 38.41 & 0.095 & 28.94 \\
\textbf{VIP-Router} & 43.64 & 0.096 & \bestval{34.08}\ohrinline{13.62} \\

\midrule
\rowcolor{blue!7}
\multicolumn{4}{@{}l}{\textit{SEED-Bench Image}} \\
Oracle & 100.00 & 0.445 & 55.48 \\
Best Fixed & 39.27 & 0.095 & 29.80 \\
\textbf{VIP-Router} & 40.21 & 0.103 & \bestval{29.87}\ohrinline{0.47} \\
\bottomrule
\end{tabularx}

{\normalsize\textbf{(b) Transfer to unseen benchmarks.}}
\end{minipage}
\end{table*}

\paragraph{Evaluation across MLLM backbones.} 
We consider four representative MLLMs from different model families: LLaVA-1.5-7B, LLaVA-OneVision-7B, InternVL3-8B, and Qwen2.5-VL-7B~\citep{LLaVA-1.5_CVPR2024,LLaVA-OneVision_TMLR2024, InternVL3_2025,Qwen2.5VL_2025}. For each backbone, we run full-token, equivalent-downsampling, and per-strategy inference and reconstruct VTC-Bench Group~A following the same protocol described in Section~\ref{app:data_split}. We then train a separate VIP-Router instance using the corresponding backbone-specific source-training labels and evaluate it on the held-out test split. Best Fixed is selected from the same source-training split and held fixed across all reduction ratios. As shown in Table~\ref{tab:generalization_results}(a), VIP-Router improves utility over Best Fixed on all four backbones, with gains ranging from 0.55 to 5.06 points. The improvement varies substantially across models, indicating that the routing formulation is broadly applicable while the magnitude of sample-level pruning heterogeneity is backbone dependent. 

\paragraph{Zero-shot transfer to unseen benchmarks.} 
We further evaluate ScienceQA-IMG, MMMU, RealWorldQA, and SEED-Bench Image~\citep{ScienceQA_NIPS2022,MMMU_CVPR2024,RealWorldQA_2024, SEED-Bench_CVPR2024}. 
For each target benchmark,  the test set is reconstructed following the same protocol used for VTC-Bench Group~A. The VIP-Router trained on the VTC-Bench Group~A training set, together with the corresponding source-selected Best Fixed strategy, is then transferred directly to the reconstructed target Group~A without target-specific router training or strategy selection. 
As shown in Table~\ref{tab:generalization_results}(b), VIP-Router achieves positive utility gains on all four unseen benchmarks, although the transfer strength varies considerably across tasks. 
The largest gains occur on ScienceQA-IMG and RealWorldQA, whereas the improvement on SEED-Bench Image is marginal. 
Overall, these results show that pruning-strategy preferences learned from the source benchmarks can transfer to unseen tasks, but the degree of transfer remains sensitive to domain shift.

\subsection{Ablation Studies}

\begin{table*}[t]
    \centering
    \caption{
    Staged ablation of the VIP-Router design.
    All ablation results are reported from single runs, and comparisons are made within each stage.
    The best utility within each stage is bolded.
    }
    \label{tab:staged_ablation_summary}
    \scriptsize
    \setlength{\tabcolsep}{5.2pt}
    \renewcommand{\arraystretch}{1.10}

    \begin{tabular*}{\textwidth}{
        @{\extracolsep{\fill}}
        c c c c c c c c c c
        @{\hspace{8pt}}
    }
        \toprule
        \# Routers
        & $r$ Input
        & $T_{\text{global}}$
        & $V_{\text{global}}$
        & Spatial Info
        & Cross Attn.
        & Pooling
        & Acc.
        & Cost
        & Utility \\
        \midrule

        \rowcolor{orange!8}
        \multicolumn{10}{l}{\textit{A. Global feature inputs}} \\

        5 & -- & \checkmark & -- & -- & -- & -- & 47.99 & 0.122 & 35.81 \\
        5 & -- & -- & \checkmark & -- & -- & -- & 51.26 & 0.132 & 38.02 \\
        5 & -- & \checkmark & \checkmark & -- & -- & -- & 52.01 & 0.139 & \textbf{38.07} \\

        \midrule
        \rowcolor{orange!8}
        \multicolumn{10}{l}{\textit{B. Cross-ratio parameter sharing}} \\

        5 & -- & \checkmark & \checkmark & -- & -- & -- & 52.01 & 0.139 & \textbf{38.07} \\
        1 & \checkmark & \checkmark & \checkmark & -- & -- & -- & 49.59 & 0.126 & 36.96 \\

        \midrule
        \rowcolor{orange!8}
        \multicolumn{10}{l}{\textit{C. Query-conditioned spatial interaction}} \\

        1 & \checkmark & \checkmark & \checkmark & -- & -- & Mean
        & 48.29 & 0.119 & 36.41 \\

        1 & \checkmark & -- & \checkmark & \checkmark & -- & Learnable
        & 50.82 & 0.136 & 37.23 \\

        1 & \checkmark & \checkmark & \checkmark & \checkmark & -- & Learnable
        & 49.81 & 0.124 & 37.45 \\

        1 & \checkmark & \checkmark & \checkmark & \checkmark & \checkmark & Masked mean
        & 50.39 & 0.125 & 37.94 \\

        1 & \checkmark & \checkmark & \checkmark & \checkmark & \checkmark & EOS
        & 51.51 & 0.134 & \textbf{38.09} \\

        \bottomrule
    \end{tabular*}
\end{table*}

\paragraph{Global feature inputs.}
As shown in Table~\ref{tab:staged_ablation_summary}, global visual features provide a substantially stronger routing signal than global text features alone.
Combining the two yields the best performance in this stage, but the gain over using $V_{\text{global}}$ alone is marginal, suggesting that coarse global text features contribute little additional information through direct concatenation.

\paragraph{Cross-ratio parameter sharing.}
Replacing five ratio-specific routers with a single ratio-conditioned router reduces utility by only 1.11 points, while consolidating all pruning levels into one shared model.
This trade-off motivates the shared-router design used in VIP-Router, with the next stage focusing on recovering the performance loss through richer cross-modal interaction.

\paragraph{Query-conditioned cross-modal interaction.}
Spatially preserving the visual representation substantially improves the shared router over global averaging, whereas adding the global text feature alone provides only a modest additional gain.
Explicit text-to-vision cross-attention further improves routing performance, with the EOS-based representation achieving the best result.
The richer cross-modal representation more than compensates for the loss from parameter sharing. 
It achieves parity with five separate ratio-specific routers (38.09 vs. 38.07) while reducing the model overhead from five Stage-A models to just one, supporting the use of query-conditioned spatial interaction in the shared router.

\section{Conclusion}

In this paper, we introduced VIP-Router, a lightweight framework that reframes vision token pruning from applying a fixed criterion uniformly across inputs to selecting the most suitable pruning strategy for each sample.
Across five reduction ratios on the pruning-sensitive VTC-Bench Group~A setting, VIP-Router consistently improves both accuracy and utility over Best Fixed, remains effective across different MLLM backbones, and transfers zero-shot to unseen benchmarks.
Beyond these empirical gains, our results show that pruning-strategy choice itself constitutes an important axis of adaptive inference.

\subsection*{AI Use Statement}

In this work, we used generative AI tools to assist with software implementation and code review, data processing and table preparation, refinement of research ideas and experimental methodology, mathematical proofs production, and editing the manuscript for clarity and readability. Generative AI was not used to generate datasets; other required-disclosure tasks were not applicable to this work.

All AI-assisted code was manually reviewed and tested by the authors. AI-assisted data processing, tables, methodological suggestions, and manuscript edits were likewise checked against the underlying implementations, experimental results, and source materials. The authors made all final research and writing decisions and take full responsibility for the content of this work, including all text, claims, code, results, and artifacts produced with the assistance of generative AI.

\subsection*{Reproducibility Statement}

We provide the information required to reproduce VIP-Router and all reported experiments in the main paper, appendix, and supplementary materials.
The routing objective and model architecture are specified in Section~\ref{sec:method}, with pseudocode provided in Appendix~\ref{app:algorithm}.
Dataset construction, record-level train/validation/test splitting, baseline selection, and evaluation metrics are described in Section~\ref{sec:experimental_setup} and Appendix~\ref{app:imple and repro details}.
Appendix~\ref{app:imple and repro details} further reports the complete training configuration, random seeds, software environment, hardware, and implementation details.
Additional encoder configurations and detailed experimental results are provided in Appendices~\ref{app:encoder_analysis} and~\ref{app:additional_results}.

\subsection*{Limitations}

Our work has two main limitations:
(i) All empirical results in this paper are currently restricted to the Group A setting. Consequently, the router’s scaling behavior and performance on unfiltered benchmarks that include pruning-insensitive samples have not yet been comprehensively evaluated.
(ii) We have also conducted preliminary experiments to explore a stricter setting: whether a routing policy learned on a source MLLM can generalize directly to unseen target architectures without any target-specific adaptation. In this exploratory cross-backbone setup, a router trained solely on a source model is deployed on target backbones with all learned components and decision parameters held fixed.
Initial observations indicate that direct zero-shot policy transfer generally fails to preserve the advantages of backbone-specific training, often falling behind static baseline policies across target architectures. 
These findings suggest that while the proposed routing formulation is broadly applicable across diverse MLLM families, the optimal sample-level pruning preferences remain largely backbone-dependent. 

\bibliographystyle{bibstyle}
\bibliography{references}

\clearpage
\appendix
\section{Appendix}
\label{appendix}

\subsection{Theoretical Motivation: Utility Prediction from a Compact Representation}

\label{sec:theory}

\paragraph{Setup.}
Treat a benchmark sample \(x=(I,q)\) together with a retained-token ratio \(r\) as a random draw from the data distribution over \((x,r)\); we drop the sample index \(i\) used in Section~\ref{sec:method}.
The retained-token ratios lie on the finite grid
\[
\mathcal{R}=\{0.25,\,0.1111,\,0.0625,\,0.04,\,0.01\}.
\]
We write the candidate set as
\(\mathcal{S}=\{\mathrm{Full}\}\cup\mathcal{S}_{\mathrm{p}}\),
where \(\mathcal{S}_{\mathrm{p}}\) contains the pruning strategies.

Recall the token cost \(C(s,r)\) and per-sample utility
\[
U(s,r)=A(s,r)-C(s,r),
\qquad A(s,r)\in\{0,1\}.
\]
More generally, this objective belongs to the family
\(A(s,r)-\lambda C(s,r)\).
We use \(\lambda=1\) throughout the paper because, under Group~A, it induces the desired ordering between successful pruning, full-token inference, and failed pruning, as detailed below.

\paragraph{Structure of the oracle.}
Under Group~A, full-token inference is correct by construction, so
\[
U(\mathrm{Full},r)=0.
\]
All pruning strategies have the same token cost \(r\): a correct pruning strategy has utility \(1-r\), whereas an incorrect one has utility \(-r\).
The oracle therefore has a simple structure:
(i) if at least one pruning strategy is correct, every correct pruning strategy is optimal with utility \(1-r\);
(ii) if all pruning strategies fail, \(\mathrm{Full}\) is uniquely optimal with utility \(0\).
Thus, the per-sample optimum may contain multiple equivalent pruning strategies, and the router only needs to select one zero-regret member of this set.

\paragraph{What the router must learn.}
Let \(\mathbf{z}(x,r)\) denote the preview representation in ~\eqref{eq:router_representation}.
For a pruning strategy \(s\in\mathcal{S}_{\mathrm{p}}\), the Bayes-optimal squared-loss utility predictor is
\begin{equation}
\bar u_s(\mathbf{z})
=
\mathbb{E}[U(s,r)\mid\mathbf{z}]
=
\bar p_s(\mathbf{z})-r,
\qquad
\bar p_s(\mathbf{z})
=
\Pr(A(s,r)=1\mid\mathbf{z}).
\end{equation}
For \(\mathrm{Full}\), Group~A gives
\(\bar u_{\mathrm{Full}}=0\)
identically.
Utility prediction therefore reduces, in principle, to estimating the conditional success probabilities of the pruning strategies and comparing them against a known cost threshold.

At the Bayes optimum,
\begin{equation}
s^\star_{\mathbf{z}}=\mathrm{Full}
\quad\Longleftrightarrow\quad
\max_{s\in\mathcal{S}_{\mathrm{p}}}\bar p_s(\mathbf{z}) < r.
\end{equation}
Thus, full-token inference is preferred only when every pruning strategy is predicted to preserve correctness with probability below the retained-token ratio.
Our implementation predicts all candidate utilities using a common output head for simplicity, but the Full target remains constant under Group~A.

\paragraph{Error decomposition.}
Let $\mathbf{U}=[U(s_1,r),\dots,U(s_K,r)]$ denote the random utility vector and $\bar{\mathbf{u}}(\mathbf{z})=\mathbb{E}[\mathbf{U}\mid\mathbf{z}]$. For any square-integrable predictor $g$, the orthogonality of the $L^2$ projection onto $\sigma(\mathbf{z})$ yields
\begin{equation}
\mathbb{E}\big\|g(\mathbf{z})-\mathbf{U}\big\|_2^{2}
=
\underbrace{\mathbb{E}\big\|\bar{\mathbf{u}}(\mathbf{z})-\mathbf{U}\big\|_2^{2}}_{\delta_{\mathrm{rep}}^{2}\ \text{(representation gap)}}
\;+\;
\underbrace{\mathbb{E}\big\|g(\mathbf{z})-\bar{\mathbf{u}}(\mathbf{z})\big\|_2^{2}}_{\delta_{\mathrm{est}}^{2}\ \text{(estimation error)}}.
\end{equation}
The representation gap is the utility uncertainty that survives compressing $(x,r)$ into $\mathbf{z}$; it cannot be reduced by enlarging the predictor, only by enriching $\mathbf{z}$. Our three-branch design (global text, global vision, and query-conditioned spatial evidence) is designed to reduce this representation gap, and we validate it empirically by ablation.
\footnote{We train with Huber loss ($\delta{=}1$, ~\eqref{eq:training_loss}) for robustness. Each target $U(s,r)$ takes values in an interval of length one ($[-r,\,1-r]$ for $s\in\mathcal{S}_{\mathrm{p}}$; $[-1,0]$ for $\mathrm{Full}$). 
For the unstandardized utility targets, predictions inside the interval incur residuals in the quadratic regime, while predictions outside it are strictly dominated; the population minimizer therefore coincides with the conditional mean. We use this observation only to motivate utility regression, since our actual training objective applies Huber loss to standardized utility targets, for which exact equivalence to the conditional mean is not assumed.}

\paragraph{Approximation by a lightweight MLP.}
Since $\mathcal{R}$ is finite and the frozen encoders produce bounded features, $\mathbf{z}$ ranges over a compact set $\mathcal{Z}$. Assuming each $\bar u_s$ is continuous on $\mathcal{Z}$, universal approximation for non-polynomial activations~\citep{LeshnosTheorem_NN1993} implies that for any $\delta_{\mathrm{mlp}}>0$ there exists a finite-width two-layer GELU network $g_\theta$ such that
\begin{equation}
\sup_{\mathbf{z}\in\mathcal{Z}}\big\|g_\theta(\mathbf{z})-\bar{\mathbf{u}}(\mathbf{z})\big\|_\infty\le\delta_{\mathrm{mlp}}.
\end{equation}
This is an existence statement; optimization and generalization residuals of the trained router are absorbed into $\delta_{\mathrm{est}}$.

\paragraph{From utility prediction to decision quality.}
Let $\hat{\mathbf{u}}=g_\theta(\mathbf{z})$ and $\hat s\in\arg\max_{s\in\mathcal{S}}\hat u_s$.

\begin{lemma}[Pointwise regret]
\label{lem:regret}
If $\|\hat{\mathbf{u}}-\mathbf{U}\|_\infty\le\varepsilon$ on a sample $(x,r)$, then $U(s^\star)-U(\hat s)\le 2\varepsilon$.
\end{lemma}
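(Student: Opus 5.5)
The plan is the standard three-term sandwich argument. It uses only the definition of \(\hat s\) as a maximizer of the predicted utilities and the sup-norm closeness of \(\hat{\mathbf{u}}\) to the realized utility vector \(\mathbf{U}\) on the given sample \((x,r)\). Here \(s^\star\) denotes the oracle decision of Eq.~(\ref{eq:oracle}), i.e.\ a maximizer of \(U(s,r)\) over \(\mathcal{S}\). Ties are allowed, since the oracle structure discussed above shows that several pruning strategies may be simultaneously optimal. No distributional assumptions are needed, because the statement is pointwise.

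The key step is to decompose the regret as
\[
U(s^\star)-U(\hat s)
=\big(U(s^\star)-\hat u_{s^\star}\big)
+\big(\hat u_{s^\star}-\hat u_{\hat s}\big)
+\big(\hat u_{\hat s}-U(\hat s)\big),
\]
and then bound each term separately.
\begin{itemize}
\item The first term is at most \(\varepsilon\) by the hypothesis \(\|\hat{\mathbf{u}}-\mathbf{U}\|_\infty\le\varepsilon\) applied to coordinate \(s^\star\).
\item The middle term is \(\le 0\), because \(\hat s\in\arg\max_s \hat u_s\).
\item The third term is at most \(\varepsilon\) by the same hypothesis applied to coordinate \(\hat s\).
\end{itemize}
Summing gives \(2\varepsilon\).

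There is no genuine obstacle here. The only care needed is bookkeeping, which I would spell out explicitly:
\begin{itemize}
\item The sup-norm hypothesis must cover the Full coordinate as well as the pruning coordinates, since both \(s^\star\) and \(\hat s\) may equal Full.
\item The bound must hold for any choice of \(s^\star\) among tied maximizers and any tie-breaking rule for \(\hat s\), since the argument never uses uniqueness.
\end{itemize}

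I would also remark that if one only controls \(\|\hat{\mathbf{u}}-\bar{\mathbf{u}}(\mathbf{z})\|_\infty\), the same argument gives a \(2\varepsilon\) bound on regret with respect to the conditional-mean utilities \(\bar{\mathbf{u}}\). This links the lemma to the MLP approximation bound \(\delta_{\mathrm{mlp}}\) stated just before it.
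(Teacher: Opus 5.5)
Your proposal is correct and is essentially the paper's proof: the paper writes the same argument as the single chain $U(s^\star)\le \hat u_{s^\star}+\varepsilon\le \hat u_{\hat s}+\varepsilon\le U(\hat s)+2\varepsilon$, which is your three-term decomposition, with the middle term bounded using the argmax property of $\hat s$. Your bookkeeping remarks (the Full coordinate and arbitrary tie-breaking) are accurate and are left implicit in the paper's version.
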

\begin{proof}
$U(s^\star)\le \hat u_{s^\star}+\varepsilon\le \hat u_{\hat s}+\varepsilon\le U(\hat s)+2\varepsilon$.
\end{proof}

\begin{corollary}[Decision margin at aggressive pruning levels]
\label{cor:margin}
Let \(\mathcal{S}^\star(x,r)\) denote the set of optimal strategies and define the nonzero decision margin as the gap between the optimal utility and the best strictly suboptimal utility.
Under Group~A,
\[
\Delta(x,r)\in\{r,\,1-r\}.
\]
If at least one pruning strategy is correct, the gap between a correct pruning strategy and \(\mathrm{Full}\) is \(1-r\).
If all pruning strategies fail, the gap between \(\mathrm{Full}\) and a failed pruning strategy is \(r\).

Consequently, a uniform utility-prediction error
\(\varepsilon<\Delta(x,r)/2\)
is sufficient to preserve an optimal decision.
For
\(r\in\{0.04,0.0625,0.1111,0.25\}\),
\(\varepsilon<0.02\) is sufficient for all samples, whereas at
\(r=0.01\)
the corresponding guarantee requires
\(\varepsilon<0.005\).
The shrinking margin at the most aggressive pruning level occurs only at the boundary between \(\mathrm{Full}\) and failed pruning strategies; crossing this boundary incurs utility regret \(r\).
\end{corollary}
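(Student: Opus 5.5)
The argument is a case analysis on the Group~A oracle structure, followed by a strict-inequality version of the argument behind Lemma~\ref{lem:regret}.

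\textbf{Step 1: the decision margin.} Under Group~A, \(U(\mathrm{Full},r)=0\). Each pruning strategy \(s\in\mathcal{S}_{\mathrm{p}}\) has \(U(s,r)\in\{1-r,\,-r\}\). So the utility vector takes at most three values: \(1-r\), \(0\), and \(-r\). There are two cases.

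\emph{Case 1: some pruning strategy is correct.} Then \(\mathcal{S}^\star\) is the set of correct pruning strategies, all with utility \(1-r\). \(\mathrm{Full}\) is always present with utility \(0\), and failed pruning strategies have utility \(-r<0\). Hence the best strictly suboptimal utility is \(0\), and \(\Delta=1-r\).

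\emph{Case 2: all pruning strategies fail.} Then \(\mathcal{S}^\star=\{\mathrm{Full}\}\) with utility \(0\). Every other candidate has utility \(-r\), so \(\Delta=r\).

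This establishes \(\Delta(x,r)\in\{r,1-r\}\).

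\textbf{Step 2: sufficiency of \(\varepsilon<\Delta/2\).} I will strengthen the chain in Lemma~\ref{lem:regret}. Suppose \(\|\hat{\mathbf u}-\mathbf U\|_\infty\le\varepsilon<\Delta/2\), and take any strictly suboptimal \(s\) and any \(s^\star\in\mathcal{S}^\star\). Then
\[
\hat u_s\le U(s)+\varepsilon\le U(s^\star)-\Delta+\varepsilon<U(s^\star)-\varepsilon\le\hat u_{s^\star}.
\]
So no suboptimal candidate can attain the argmax, and \(\hat s\in\mathcal{S}^\star\). Ties among optimal candidates are harmless, because every member of \(\mathcal{S}^\star\) has zero regret.

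\textbf{Step 3: numerical instantiation.} The sufficient threshold at ratio \(r\) is \(\min(r,1-r)/2\). Every \(r\in\mathcal R\) is at most \(1/2\), so this equals \(r/2\).

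For \(r\in\{0.04,0.0625,0.1111,0.25\}\), the smallest value of \(r/2\) is \(0.02\). Hence \(\varepsilon<0.02\) suffices for all samples at these ratios. At \(r=0.01\) the requirement becomes \(\varepsilon<0.005\).

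The binding margin is \(r\), and it arises only in Case~2, i.e., at the boundary between \(\mathrm{Full}\) and failed pruning strategies. Crossing that boundary means picking a failed pruning strategy instead of \(\mathrm{Full}\), which costs exactly \(0-(-r)=r\) in utility.

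\textbf{Main obstacle.} There is little real difficulty here. The only delicate points are the bookkeeping ones: using strict inequality in Step~2, correctly handling multiple optimal strategies in Case~1, and noting that \(\mathrm{Full}\) always exists as a suboptimal candidate there, so \(\Delta\) is well defined.
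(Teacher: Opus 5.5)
Your proof is correct and matches the paper's reasoning: you use the same two-case analysis of the Group~A oracle (correct pruning at $1-r$, Full at $0$, failed pruning at $-r$) to get $\Delta\in\{r,1-r\}$, and your strict-inequality version of the chain in Lemma~\ref{lem:regret} is equivalent to the paper's implicit step that regret $\le 2\varepsilon<\Delta$ forces zero regret. The numerical thresholds then follow, as you note, from $\min(r,1-r)/2=r/2$ for every $r\in\mathcal{R}$.
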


\begin{proposition}[Expected regret]
\label{prop:expected}
\begin{equation}
\mathbb{E}\big[\,U(s^\star)-U(\hat s)\,\big]
\;\le\;
2\,\mathbb{E}\big\|\hat{\mathbf{u}}-\mathbf{U}\big\|_\infty
\;\le\;
2\sqrt{\delta_{\mathrm{rep}}^{2}+\delta_{\mathrm{est}}^{2}},
\end{equation}
where the first inequality applies the proof of Lemma~\ref{lem:regret} pointwise, and the second uses $\|\cdot\|_\infty\le\|\cdot\|_2$ and Jensen's inequality.
\end{proposition}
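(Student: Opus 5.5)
The argument has two steps: a pointwise bound from the proof of Lemma~\ref{lem:regret}, followed by moment inequalities. \textbf{Step 1 (pointwise bound without a uniform hypothesis).} Fix a sample $(x,r)$ and set $\varepsilon(x,r):=\|\hat{\mathbf{u}}-\mathbf{U}\|_\infty$. This is a random variable, and the hypothesis of Lemma~\ref{lem:regret} holds with this sample-dependent $\varepsilon$. The chain
\[
U(s^\star)\le \hat u_{s^\star}+\varepsilon\le \hat u_{\hat s}+\varepsilon\le U(\hat s)+2\varepsilon
\]
uses only three facts: $\hat s$ maximizes $\hat{\mathbf{u}}$, and the coordinates $s^\star$ and $\hat s$ each deviate from $\mathbf{U}$ by at most $\varepsilon$. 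Hence $0\le U(s^\star)-U(\hat s)\le 2\|\hat{\mathbf{u}}-\mathbf{U}\|_\infty$ holds almost surely. Taking expectations, using monotonicity and linearity, gives the first inequality. Integrability is not an issue: $U$ is bounded in $[-1,1]$, and we assume $\hat{\mathbf{u}}=g_\theta(\mathbf{z})$ is square-integrable, as in the error decomposition.

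\textbf{Step 2 (from $\ell_\infty$ to the decomposition).} Apply $\|\cdot\|_\infty\le\|\cdot\|_2$ pointwise. Then use Jensen's inequality for the concave square root, which is equivalently Cauchy--Schwarz:
\[
\mathbb{E}\|\hat{\mathbf{u}}-\mathbf{U}\|_2\le\bigl(\mathbb{E}\|\hat{\mathbf{u}}-\mathbf{U}\|_2^2\bigr)^{1/2}.
\]
Finally, substitute the error decomposition with $g=g_\theta$:
\[
\mathbb{E}\|g_\theta(\mathbf{z})-\mathbf{U}\|_2^2=\delta_{\mathrm{rep}}^2+\delta_{\mathrm{est}}^2 .
\]
This identity requires the cross term
\[
\mathbb{E}\bigl\langle g(\mathbf{z})-\bar{\mathbf{u}}(\mathbf{z}),\,\bar{\mathbf{u}}(\mathbf{z})-\mathbf{U}\bigr\rangle
\]
to vanish. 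It does, by the tower property: conditioning on $\mathbf{z}$ makes the first factor measurable, and $\mathbb{E}[\bar{\mathbf{u}}(\mathbf{z})-\mathbf{U}\mid\mathbf{z}]=0$.

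\textbf{Main obstacle.} The only delicate point is measurability. The cross term vanishes only if $\hat{\mathbf{u}}$ is a function of $\mathbf{z}$ alone, that is, $\sigma(\mathbf{z})$-measurable. Two features of the trained router could break this. First, the trained router regresses standardized targets and then maps them back; this map is a fixed affine transformation per coordinate, so it preserves measurability. Second, ties in the $\arg\max$ require a deterministic tie-breaking rule. Neither affects the argument provided $g_\theta$ is fixed, with training data independent of the evaluation draw. I would state this assumption explicitly, noting that the expectation is taken over test draws conditional on the trained parameters.
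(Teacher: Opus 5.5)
Your proposal is correct and matches the paper's argument: apply the chain from Lemma~\ref{lem:regret} pointwise with the sample-dependent $\varepsilon=\|\hat{\mathbf{u}}-\mathbf{U}\|_\infty$, then use $\|\cdot\|_\infty\le\|\cdot\|_2$, Jensen's inequality, and the orthogonal error decomposition with $g=g_\theta$. Your remarks on the vanishing cross term and on treating $g_\theta$ as a fixed $\sigma(\mathbf{z})$-measurable map (with the expectation conditional on the trained parameters) only make explicit what the paper leaves implicit; note that deterministic tie-breaking is needed for the measurability of $U(\hat s)$, not for the vanishing of the cross term.
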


\paragraph{Scope of the analysis.}
The analysis formalizes a limited claim: sample-adaptive pruning-strategy selection can be reduced to estimating conditional pruning success accurately enough to preserve its ordering relative to known token costs.
It does not establish that the preview representation is sufficient a priori.
This is an empirical question, which we evaluate through the feature ablations and the observed utility headroom recovered relative to the Per-Sample Oracle.

\begin{remark}[Upper bound on the contribution of full-token fallback]
\label{rem:full_fallback_bound}

Let \(f_r\) denote the fraction of samples for which VIP-Router selects
\(\mathrm{Full}\) at retained-token ratio \(r\).
Consider a counterfactual policy that disables \(\mathrm{Full}\) and instead
routes these samples to a compressed strategy.
In the worst case, all such replacements are incorrect.
Accuracy can therefore decrease by at most \(f_r\), while Token Cost
decreases by exactly \(f_r(1-r)\).
Consequently,
\begin{equation}
U_{\mathrm{forced\text{-}pruning}}
\ge
U_{\mathrm{VIP\text{-}Router}} - f_r r .
\label{eq:full_fallback_bound}
\end{equation}

Using the five-seed mean routing frequencies, the corresponding bounds are
\(1.23\), \(0.41\), \(0.22\), \(0.14\), and \(0.07\) utility points
at the five reduction ratios, respectively.
The average bound is only \(0.41\) utility points, corresponding to at most
\(6.1\%\) of VIP-Router's observed utility gain over Best Fixed.
Thus, even under this worst-case counterfactual, access to full-token
inference can explain only a small fraction of the overall improvement.
\end{remark}

\clearpage

\subsection{Algorithm}
\label{app:algorithm}

\begin{algorithm}[H]
\caption{Training procedure of VIP-Router.}
\label{alg:viprouter_training}
\begin{algorithmic}[1]
\Require Training set
\(\mathcal{D}_{\mathrm{train}}
= \{(I_i,q_i,r_i,\mathbf{a}_i)\}_{i=1}^{N}\),
candidate set \(\mathcal{S}=\{s_1,\ldots,s_K\}\),
frozen preview encoders,
and router \(g_\theta\)
\Ensure Trained router parameters \(\theta^\star\) and
training-set normalization statistics
\(\{(\mu_k,\sigma_k)\}_{k=1}^{K}\)

\State Construct utility targets for all training samples:
\[
U_i(s_k,r_i)
\gets
a_i^{s_k}-c(s_k,r_i),
\qquad
i=1,\ldots,N,\;\; k=1,\ldots,K
\]

\State Compute per-strategy normalization statistics on the training set:
\[
\mu_k
\gets
\frac{1}{N}\sum_{i=1}^{N} U_i(s_k,r_i),
\qquad
\sigma_k
\gets
\sqrt{
\frac{1}{N}\sum_{i=1}^{N}
\bigl(U_i(s_k,r_i)-\mu_k\bigr)^2
},
\quad k=1,\ldots,K
\]

\For{each training epoch}
    \For{each minibatch \(\mathcal{B}\subset\mathcal{D}_{\mathrm{train}}\)}
        \For{each \((I_i,q_i,r_i,\mathbf{a}_i)\in\mathcal{B}\)}
            \State Standardize the utility target for each strategy:
            \[
            \widetilde{U}_{i,k}
            \gets
            \frac{U_i(s_k,r_i)-\mu_k}{\sigma_k},
            \qquad k=1,\ldots,K
            \]

            \State Form the standardized target utility vector:
            \[
            \widetilde{\mathbf{U}}_i
            \gets
            [\widetilde{U}_{i,1},\ldots,\widetilde{U}_{i,K}]^\top
            \]

            \State Extract frozen visual and textual preview features from \((I_i,q_i)\)
            \State Compute query-conditioned cross-modal features

            \State Predict standardized strategy utilities:
            \[
            \hat{\widetilde{\mathbf{u}}}_i
            \gets
            g_\theta(I_i,q_i,r_i)
            \]
        \EndFor

        \State Compute the element-wise Huber loss:
        \[
        \mathcal{L}
        \gets
        \frac{1}{|\mathcal{B}|K}
        \sum_{i\in\mathcal{B}}
        \sum_{k=1}^{K}
        \ell_{\mathrm{Huber}}
        \bigl(
        \hat{\widetilde{u}}_{i,k},
        \widetilde{U}_{i,k}
        \bigr)
        \]

        \State Update \(\theta\) by minimizing \(\mathcal{L}\)
    \EndFor

    \State Evaluate utility on the validation set after recovering utilities to the original scale
\EndFor

\State \Return checkpoint \(\theta^\star\) with the highest validation utility
and \(\{(\mu_k,\sigma_k)\}_{k=1}^{K}\)
\end{algorithmic}
\end{algorithm}

\begin{algorithm}[H]
\caption{Inference procedure of VIP-Router.}
\label{alg:viprouter_inference}
\begin{algorithmic}[1]
\Require Image \(I\), question \(q\), retained-token ratio \(r\),
candidate set \(\mathcal{S}=\{s_1,\ldots,s_K\}\),
trained router \(g_{\theta^\star}\),
training-set normalization statistics
\(\{(\mu_k,\sigma_k)\}_{k=1}^{K}\),
and frozen MLLM \(f\)

\State Extract frozen visual and textual preview features from \((I,q)\)
\State Compute query-conditioned cross-modal features

\State Predict standardized strategy utilities:
\[
\hat{\widetilde{\mathbf{u}}}
\gets
g_{\theta^\star}(I,q,r)
\]

\State Recover predicted utilities on the original scale:
\[
\hat{u}_k
\gets
\sigma_k \hat{\widetilde{u}}_k + \mu_k,
\qquad k=1,\ldots,K
\]

\State Select the highest-utility candidate:
\[
\hat{s}
\gets
s_{\arg\max_{k\in\{1,\ldots,K\}}\hat{u}_k}
\]

\State Apply \(\hat{s}\) and run the frozen MLLM once:
\[
\hat{y}
\gets
f_{\hat{s}}(I,q;r)
\]

\State \Return \(\hat{y}\)
\end{algorithmic}
\end{algorithm}

\subsection{Implementation and Reproducibility Details}
\label{app:imple and repro details}

\subsubsection{Dataset Construction and Split Details}
\label{app:data_split}

We construct the routing dataset from the per-sample inference results officially released by VTC-Bench~\citep{VTC-Bench_ACL2026} for
Qwen2-VL-7B-Instruct.
The released results include full-token inference, equivalent-ratio image downsampling, and the four candidate pruning methods evaluated at multiple reduction ratios.
For each benchmark, binary correctness is derived from its task-specific evaluation fields.
For MMBench and MMBench-CN, we compare the extracted multiple-choice prediction with the ground-truth answer.

Following VTC-Bench, we reconstruct Group~A independently at each reduction ratio.
An image--question record is included at a given ratio when full-token
inference is correct while equivalent-ratio image downsampling is incorrect.
We then associate the correctness labels of FastV, VisionZip, PruMerge\textsuperscript{+}, and DART with each eligible record--ratio pair.
This procedure produces 33,091 routing examples from 12,919 unique
image--question records.
Because Group~A membership depends on the reduction ratio, the routing data form a sparse record--ratio grid: the same record may appear at only a subset of the evaluated ratios.

We evaluate five retained-token ratios,
\[
r\in\{0.25,\;0.1111,\;0.0625,\;0.04,\;0.01\},
\]
corresponding to visual-token reduction ratios of \(75.00\%\), \(88.89\%\), \(93.75\%\), \(96.00\%\), and \(99.00\%\), respectively.
The ratio inputs to VIP-Router are \(r\) and \(\ln(r+10^{-8})\).

Data splitting is performed at the image--question-record level rather than at the record--ratio level, ensuring that all ratio-specific examples of the same record remain in the same split.
Records are identified by the pair \texttt{benchmark:doc\_id} and assigned
deterministically to training, validation, and test sets using a fixed
hash-based \(70/15/15\) split.
This results in 8,988 training, 1,956 validation, and 1,975 test records,
with no record shared across splits.
The split is record-disjoint but not necessarily image-disjoint: when the same underlying image is paired with multiple questions and therefore multiple document IDs, the resulting image--question records are treated as distinct samples.

\begin{table}[t]
\centering
\caption{
Number of routing examples at each visual-token reduction ratio.
Group~A membership is ratio-dependent, so the number of eligible examples
varies across ratios.
}
\label{tab:appendix_dataset_stats}
\small
\setlength{\tabcolsep}{6pt}
\begin{tabular}{ccrrrr}
\toprule
Reduction & Retained \(r\) & Train & Val. & Test & Total \\
\midrule
75.00\%  & 0.2500 & 1,998 & 411   & 431   & 2,840 \\
88.89\%  & 0.1111 & 3,720 & 810   & 801   & 5,331 \\
93.75\%  & 0.0625 & 4,772 & 1,044 & 1,037 & 6,853 \\
96.00\%  & 0.0400 & 5,391 & 1,136 & 1,183 & 7,710 \\
99.00\%  & 0.0100 & 7,218 & 1,551 & 1,588 & 10,357 \\
\midrule
Total    & --     & 23,099 & 4,952 & 5,040 & 33,091 \\
\bottomrule
\end{tabular}
\end{table}

\subsubsection{Router Architecture and Training Configuration}
\label{app:router_training_details}

\paragraph{Preview encoder.}
The default VIP-Router uses the frozen \texttt{openai/clip-vit-base-patch32} checkpoint for both visual and textual preview encoding.
Images follow the default CLIP preprocessing and are resized and
center-cropped to \(224\times224\).
We extract the penultimate visual hidden representation, discard the CLS token, and retain the resulting 49 patch tokens \(\mathbf{P}\in\mathbb{R}^{49\times768}\).
Their mean forms the 768-dimensional global visual representation.

Questions are tokenized to a maximum length of 77 tokens.
We retain the 512-dimensional text-token representations and use the
CLIP-projected EOS representation as the 512-dimensional global text feature.
All CLIP parameters remain frozen during router training.

\paragraph{Cross-modal representation and utility prediction.}
Text and visual token representations are projected independently to
256 dimensions.
A single four-head text-to-vision cross-attention layer uses the projected text tokens as queries and the projected visual tokens as keys and values.
The attention output is followed by LayerNorm, and the EOS-position output is used as the 256-dimensional query-conditioned visual representation \(\mathbf{h}^{\times}\).
The attention layer uses dropout \(0.1\); no additional residual or
feed-forward block is introduced.

The router input is
\[
\mathbf{z}
=
[\mathbf{t}_{\mathrm{global}};
 \mathbf{v}_{\mathrm{global}};
 r;
 \ln(r+10^{-8});
 \mathbf{h}^{\times}]
\in\mathbb{R}^{1538}.
\]
A two-layer MLP,
\[
1538 \rightarrow 512 \rightarrow 5,
\]
with GELU activation and dropout \(0.1\), predicts the utilities of
\[
\{\mathrm{FastV},\mathrm{VisionZip},
  \mathrm{PruMerge}^{+},\mathrm{DART},\mathrm{Full}\}.
\]
Full is treated as an ordinary candidate whose utility is predicted by the router rather than analytically fixed.
The complete router contains 1,382,405 trainable parameters.

\paragraph{Standardization.}
All standardization statistics are estimated exclusively from the training split.
The frozen global preview features and the two ratio features are
standardized feature-wise using their training-set means and standard
deviations.

Utility targets are standardized separately for each candidate.
For candidate \(k\), let \(\mu_k\) and \(\sigma_k\) denote the mean and standard deviation of its training-set utility.
The regression target is
\begin{equation}
\widetilde{U}_{i,k}
=
\frac{U_{i,k}-\mu_k}{\sigma_k}.
\label{eq:utility_standardization}
\end{equation}
For constant dimensions, including the Full utility under Group~A, the
standardization scale is set to \(1\).
At inference time, predicted standardized utilities are transformed back to the original utility scale,
\begin{equation}
\hat U_{i,k}
=
\hat{\widetilde U}_{i,k}\sigma_k+\mu_k,
\end{equation}
and the candidate with the largest predicted utility is selected.
Validation and test samples are not used to estimate any standardization statistics.

\paragraph{Optimization.}
We train the projection layers, cross-attention module, LayerNorm, and utility predictor using AdamW with learning rate \(10^{-3}\), weight decay \(10^{-4}\), and batch size 256.
Training runs for at most 50 epochs with a linear warm-up over the first \(5\%\) of optimization steps followed by cosine learning-rate decay.
Gradients are clipped to a global norm of \(1.0\).

The objective is an element-wise Huber loss with \(\delta=1\), averaged over
samples and the five candidate utilities in the standardized target space.
Training is performed in FP32.
We use early stopping with patience five.
For checkpoint selection, validation utility is first averaged within each retained-token ratio and then macro-averaged across the five ratios.
The checkpoint with the highest validation macro utility is used for final test evaluation.

Frozen CLIP representations are precomputed once for each unique
image--question record and reused across its available retained-token ratios; all trainable VIP-Router components are evaluated online.

Unless otherwise stated, VIP-Router results are averaged over five independent training runs with random seeds \(0,1,2,3,4\).
All runs use the same deterministic data split, cached preview features, architecture, target standardization, and optimization configuration; only the random training trajectory differs.

\subsubsection{Hardware and Software Environment}
\label{app:hardware_software}

\begin{sloppypar}
All experiments are conducted on a single NVIDIA H200 GPU.
We use Python~3.10.12, PyTorch~2.3.0 with CUDA~12.4, and
Transformers~4.49.0.
The CLIP preview encoder is implemented with \texttt{transformers.\allowbreak CLIPModel} and \texttt{CLIPProcessor}.
Router training is performed in FP32.
\end{sloppypar}

\subsection{Encoder Analysis}
\label{app:encoder_analysis}

\begin{table*}[t]
\centering
\caption{
Performance of VIP-Router with alternative frozen preview encoders.
The default CLIP-B/32 result is averaged over five random seeds, whereas the
alternative encoder variants are single-run results and are therefore intended
as robustness evidence rather than a strict encoder ranking.
}
\label{tab:encoder_analysis}
\footnotesize
\setlength{\tabcolsep}{5.0pt}
\renewcommand{\arraystretch}{1.10}

\begin{tabular*}{\textwidth}{
@{\extracolsep{\fill}}
l l l c c c
@{}
}
\toprule
\textbf{Variant}
& \textbf{Vision Encoder}
& \textbf{Text Encoder}
& \textbf{Acc.}
& \textbf{Cost}
& \textbf{Utility} \\
\midrule

Default
& CLIP ViT-B/32
& CLIP Text
& 51.19
& 0.135
& 37.68 \\

CLIP-B/16
& CLIP ViT-B/16
& CLIP Text
& 50.31
& 0.130
& 37.28 \\

SigLIP2-B/32
& SigLIP2-B/32
& SigLIP2 Text
& 47.38
& 0.110
& 36.36 \\

TIPSv2-B/14
& TIPSv2-B/14
& TIPSv2 Text
& 49.87
& 0.125
& 37.34 \\

EUPE + CLIP Text
& EUPE-ViT-B
& CLIP Text
& 51.12
& 0.132
& 37.96 \\

EUPE + SigLIP2 Text
& EUPE-ViT-B
& SigLIP2 Text
& 50.47
& 0.131
& 37.34 \\

\bottomrule
\end{tabular*}
\end{table*}

\paragraph{Encoder alternative.}
The default VIP-Router uses CLIP ViT-B/32 and its paired text encoder to construct the preview representation.
To examine whether the routing framework remains effective with alternative preview encoders, we replace the default frozen encoders while keeping the routing objective, candidate strategy set, dataset split, and architecture unchanged.
All variants retain the same text-to-vision cross-attention module, EOS-style pooling, ratio conditioning, and two-layer utility predictor.

As shown in Table~\ref{tab:encoder_analysis}, VIP-Router remains effective across substantially different preview encoder families.
CLIP-B/16 \citep{CLIP_ICML2021}, TIPSv2-B/14 \citep{TIPSv2_CVPR2026}, and both EUPE \citep{EUPE_2026} variants achieve utility close to the default CLIP-B/32 configuration, while SigLIP2-B/32 \citep{SigLIP2_2025} exhibits a somewhat larger decrease.
Notably, the EUPE--CLIP variant achieves slightly higher utility than the default configuration, with comparable accuracy.
These results indicate that the routing formulation is not restricted to a single preview feature family and can operate with multiple frozen vision--language representations.

We do not interpret the small differences among encoder variants as a strict ranking.
The alternative configurations were evaluated as single runs, and their training RNG trajectories are not perfectly matched across variants.
Accordingly, this experiment is intended to assess the robustness of VIP-Router to encoder replacement rather than to identify an optimal preview encoder.

\subsection{Additional Results}
\label{app:additional_results}

\begin{table*}[!htbp]
  \centering
  \caption{
Per-benchmark accuracy (\%) of fixed pruning strategies and VIP-Router on VTC-Bench Group~A test set across visual-token reduction ratios.
Full is shown in gray as a reference and has \(100\%\) accuracy by the construction of Group~A; it is excluded from highlighting.
Fixed-strategy results are deterministic, whereas VIP-Router results are averaged over five random seeds.
Within each reduction-ratio block, the best pruning result in each column is bolded, and the second-best result in the Avg. column is underlined.
  }
  \label{tab:per_benchmark_accuracy}
  \footnotesize
  \setlength{\tabcolsep}{5.0pt}
  \renewcommand{\arraystretch}{1.16}
  \resizebox{\textwidth}{!}{%
  \begin{tabular}{
    @{}
    l
    !{\color{black!35}\vrule width 0.4pt}
    *{8}{c}
    !{\color{black!35}\vrule width 0.4pt}
    c
    @{\hspace{6pt}}
  }
    \toprule
    \textbf{Method}
    & \textbf{GQA}
    & \textbf{MMB}
    & \textbf{MMB}\textsuperscript{CN}
    & \textbf{MME}
    & \textbf{POPE}
    & \textbf{MMStar}
    & \textbf{OCR}
    & \textbf{ChartQA}
    & \textbf{Overall} \\
    \midrule

    \textcolor{black!45}{Full}
    & \textcolor{black!45}{100.00}
    & \textcolor{black!45}{100.00}
    & \textcolor{black!45}{100.00}
    & \textcolor{black!45}{100.00}
    & \textcolor{black!45}{100.00}
    & \textcolor{black!45}{100.00}
    & \textcolor{black!45}{100.00}
    & \textcolor{black!45}{100.00}
    & \textcolor{black!45}{100.00} \\
    \midrule

    \rowcolor{gray!12}
    \multicolumn{10}{c}{\textit{Reduction ratio = 75.00\%}} \\[1pt]
    FastV
    & 61.59 & 54.05 & 43.33 & 87.50 & 69.44 & 55.88 & 25.81 & 39.71 & 54.99 \\
    VisionZip
    & 60.26 & 54.05 & 33.33 & 62.50 & 80.56 & \textbf{58.82} & 19.35 & \textbf{55.88} & \underline{57.54} \\
    PruMerge\textsuperscript{+}
    & \textbf{64.24} & 62.16 & 40.00 & 62.50 & 77.78 & 52.94 & 22.58 & 42.65 & 57.31 \\
    DART
    & 56.95 & 51.35 & 46.67 & 87.50 & 73.61 & 50.00 & 51.61 & 41.18 & 55.68 \\
    \rowcolor{orange!8}
    \textbf{VIP-Router}
    & 64.11 & \textbf{63.24} & \textbf{66.67} & \textbf{90.00}
    & \textbf{90.83} & 50.59 & \textbf{56.77} & 47.65 & \textbf{66.91} \\
    \midrule

    \rowcolor{gray!12}
    \multicolumn{10}{c}{\textit{Reduction ratio = 88.89\%}} \\[1pt]
    FastV
    & 42.45 & 36.21 & 28.81 & 41.18 & 43.84 & 25.00 & 10.00 & 32.26 & 35.33 \\
    VisionZip
    & 46.70 & 27.59 & 33.90 & 41.18 & 74.66 & 28.12 & 20.00 & \textbf{51.61} & \underline{47.94} \\
    PruMerge\textsuperscript{+}
    & 47.64 & 25.86 & 22.03 & 47.06 & 71.23 & 31.25 & 10.00 & 41.01 & 43.20 \\
    DART
    & 49.06 & 32.76 & 37.29 & \textbf{58.82} & 60.96 & \textbf{46.88} & 33.33 & 26.73 & 42.07 \\
    \rowcolor{orange!8}
    \textbf{VIP-Router}
    & \textbf{53.58} & \textbf{46.90} & \textbf{49.15} & 57.65
    & \textbf{83.70} & 34.38 & \textbf{47.00} & 51.52 & \textbf{57.65} \\
    \midrule

    \rowcolor{gray!12}
    \multicolumn{10}{c}{\textit{Reduction ratio = 93.75\%}} \\[1pt]
    FastV
    & 38.08 & 36.49 & 25.00 & \textbf{52.94} & 38.71 & 31.82 & 8.11 & 15.22 & 29.22 \\
    VisionZip
    & 38.43 & 28.38 & 27.94 & 38.24 & 70.97 & 31.82 & 8.11 & 27.17 & 37.42 \\
    PruMerge\textsuperscript{+}
    & 46.26 & 27.03 & 22.06 & 38.24 & 73.12 & 31.82 & 9.46 & 27.90 & \underline{39.73} \\
    DART
    & 44.48 & 25.68 & 39.71 & 44.12 & 59.14 & \textbf{34.09} & 31.08 & 15.22 & 36.26 \\
    \rowcolor{orange!8}
    \textbf{VIP-Router}
    & \textbf{49.82} & \textbf{43.24} & \textbf{46.47} & 48.24
    & \textbf{85.27} & 31.36 & \textbf{43.51} & \textbf{29.35} & \textbf{49.62} \\
    \midrule

    \rowcolor{gray!12}
    \multicolumn{10}{c}{\textit{Reduction ratio = 96.00\%}} \\[1pt]
    FastV
    & 28.49 & 28.42 & 27.62 & 27.59 & 37.32 & 31.48 & 7.50 & 8.03 & 23.92 \\
    VisionZip
    & 37.09 & 33.68 & 28.57 & 37.93 & 66.99 & 33.33 & 6.25 & 10.95 & 33.05 \\
    PruMerge\textsuperscript{+}
    & 39.17 & 25.26 & 22.86 & 34.48 & 62.20 & 24.07 & 5.00 & 15.69 & 32.12 \\
    DART
    & 41.84 & 24.21 & 39.05 & \textbf{44.83} & 56.46 & 35.19 & 31.25 & 12.04 & \underline{34.91} \\
    \rowcolor{orange!8}
    \textbf{VIP-Router}
    & \textbf{43.50} & \textbf{40.63} & \textbf{46.48} & 40.69
    & \textbf{78.47} & \textbf{35.93} & \textbf{41.50} & \textbf{20.07} & \textbf{44.29} \\
    \midrule

    \rowcolor{gray!12}
    \multicolumn{10}{c}{\textit{Reduction ratio = 99.00\%}} \\[1pt]
    FastV
    & 18.57 & 14.71 & 20.73 & 19.35 & 45.49 & 15.87 & 2.73 & 3.28 & 18.89 \\
    VisionZip
    & 24.90 & 26.47 & 34.15 & 33.87 & 51.76 & 28.57 & 0.00 & 5.84 & \underline{25.82} \\
    PruMerge\textsuperscript{+}
    & 22.04 & 15.29 & 16.46 & 24.19 & 53.33 & 23.81 & 1.82 & 7.30 & 21.98 \\
    DART
    & 24.69 & 19.41 & 29.27 & \textbf{37.10} & 38.43 & 26.98 & 30.91 & 3.65 & 24.18 \\
    \rowcolor{orange!8}
    \textbf{VIP-Router}
    & \textbf{27.14} & \textbf{37.76} & \textbf{43.66} & 36.45
    & \textbf{61.57} & \textbf{32.06} & \textbf{41.27} & \textbf{25.47} & \textbf{37.48} \\
    \bottomrule
  \end{tabular}%
  }
\end{table*}

\subsubsection{Detailed Per-Benchmark Results}
\label{app:detailed_results}

Table~\ref{tab:per_benchmark_accuracy} reports the complete per-benchmark accuracy results underlying the aggregate comparison in Section~\ref{sec:main_results}.
VIP-Router achieves the best pruning result in 30 of the 40 benchmark--ratio combinations (75.0\%), showing that its improvement is broadly distributed across benchmarks rather than driven by a small subset of tasks.
Its gains also persist under the most aggressive reduction settings, where the accuracy of all fixed pruning strategies degrades substantially.
Full-token inference is included only as a reference and has \(100\%\) accuracy by the construction of VTC-Bench Group~A.
The strongest fixed strategy varies with the pruning level:
VisionZip achieves the highest overall accuracy among fixed strategies at \(75.00\%\), \(88.89\%\), and \(99.00\%\) reduction, whereas PruMerge\textsuperscript{+} and DART are strongest at \(93.75\%\) and \(96.00\%\), respectively.
VIP-Router nevertheless achieves higher overall accuracy than the strongest fixed strategy at every evaluated reduction ratio.

\begin{figure*}[t]
    \centering
    \includegraphics[width=\textwidth]{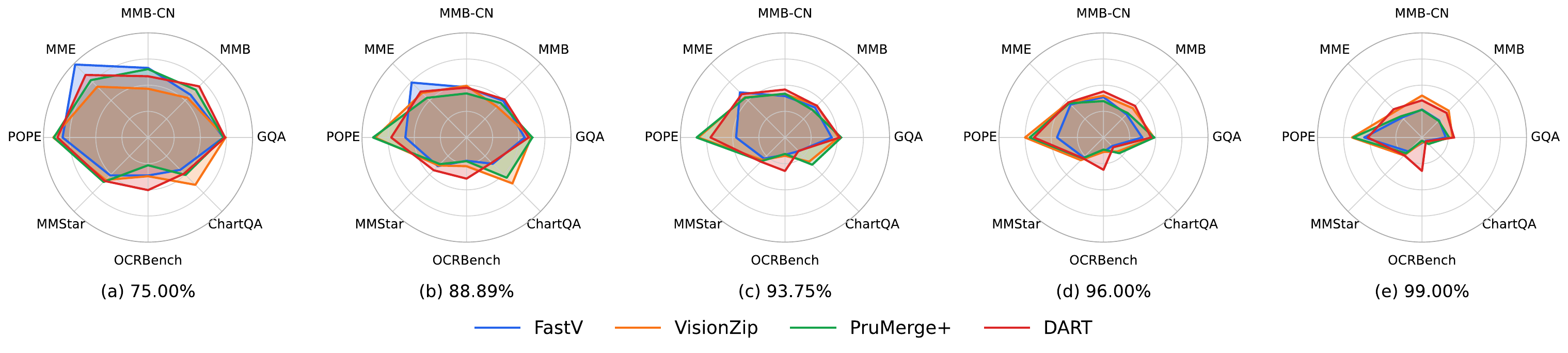}
    \caption{
   Per-benchmark accuracy of the four fixed pruning strategies on VTC-Bench Group A.
    }
    \label{fig:vtc_groupa_radar}
\end{figure*}

\subsubsection{Complementarity among Pruning Strategies}
\label{app:strategy_complementarity_analysis}

Figure~\ref{fig:vtc_groupa_radar} provides the per-benchmark comparison of the fixed pruning strategies. Their relative performance varies across benchmarks, and no single pruning criterion is uniformly strongest. Across reduction ratios, a substantial fraction of Group~A samples for which Best Fixed fails can still be correctly handled by at least one alternative pruning strategy. Thus, failures of the globally selected fixed strategy are frequently recoverable by another pruning criterion. This sample-level evidence complements the Best Fixed--Oracle gap reported in the main text and shows that benchmark-average rankings mask substantial strategy complementarity.

\begin{figure*}[t]
    \centering
    \includegraphics[width=0.7\textwidth]{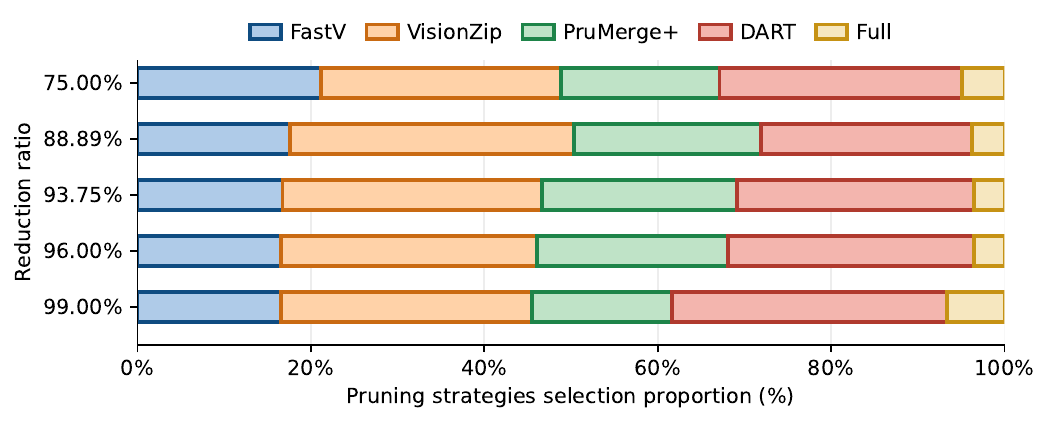}
    \caption{
   VIP-Router strategy-selection distribution across visual-token reduction
ratios.
Each bar reports the mean fraction of test samples routed to FastV,
VisionZip, PruMerge\textsuperscript{+}, DART, or Full over five random seeds.
    }
    \label{fig:strategy_selection_distribution}
\end{figure*}

\subsubsection{VIP-Router Selection Behavior}
\label{app:routing_behavior}

Figure~\ref{fig:strategy_selection_distribution} shows the strategy-selection distribution of VIP-Router across reduction ratios.
The routing decisions remain distributed across all four pruning methods at every operating point, indicating that VIP-Router does not collapse to another globally fixed pruning policy.
The relative preference among pruning strategies also changes with the pruning level; for example, DART is selected increasingly often under aggressive reduction, whereas VisionZip remains a substantial component
throughout the evaluated range.

Full-token inference remains a minority choice.
VIP-Router selects Full for \(4.92\%\), \(3.72\%\), \(3.51\%\), \(3.48\%\), and \(6.66\%\) of samples at reduction ratios of \(75.00\%\), \(88.89\%\), \(93.75\%\), \(96.00\%\), and \(99.00\%\), respectively.
Thus, the accuracy improvement is not obtained by routinely reverting to full-token inference.
Together with the counterfactual bound in Remark~\ref{rem:full_fallback_bound}, these results indicate that the primary gain comes from sample-adaptive selection among heterogeneous pruning strategies.

\subsubsection{Routing Error Diagnosis}
\label{app:routing_diagnosis}

We analyze the saved test predictions of the final VIP-Router across five random seeds, with 5,040 image--question-record--ratio decisions per seed.
Let \(s_i^\star\) denote the oracle argmax returned by the evaluation procedure and \(\hat{s}_i\) the router's selected strategy.
We distinguish an oracle-argmax mismatch,
\(\hat{s}_i \neq s_i^\star\), from the incurred utility regret:
\[
\Delta_i
=
U_i(s_i^\star,r_i)-U_i(\hat{s}_i,r_i).
\]
An argmax mismatch is not a VQA answer error and may incur zero regret when multiple strategies attain the same optimal utility.
For mismatched decisions, we define the oracle margin as
\[
m_i = U_i^{(1)}-U_i^{(2)},
\]
where \(U_i^{(1)}\) and \(U_i^{(2)}\) are the two highest candidate utilities, including ties.
We classify mismatches as near-tie when \(m_i \leq 0.03\) and high-margin otherwise.

\begin{table*}[!htbp]
\centering
\caption{
Routing-error diagnosis of VIP-Router on VTC-Bench Group~A.
Values are means across five random seeds.
Mismatch percentages are fractions of all oracle-argmax mismatches, and regret percentages are fractions of total regret; both pool all test record--ratio pairs within each run rather than macro-averaging across ratios.
Mean regret is computed within each decision group on the raw utility scale.
Neighborhood statistics use \(k=10\).
}
\label{tab:routing_error_diagnosis}
\footnotesize
\setlength{\tabcolsep}{4pt}
\renewcommand{\arraystretch}{1.15}

\begin{tabular*}{\textwidth}{
@{\extracolsep{\fill}}
l c c c c c
@{}
}
\toprule
\textbf{Decision group}
& \shortstack{\textbf{Mismatch}\\\textbf{(\%)}}
& \shortstack{\textbf{Regret}\\\textbf{(\%)}}
& \shortstack{\textbf{Mean}\\\textbf{regret}}
& \shortstack{\textbf{Neighbor}\\\textbf{agreement}}
& \shortstack{\textbf{Neighbor entropy}\\\textbf{(nats)}} \\
\midrule

Oracle-argmax match
& --
& 0
& 0
& 0.436
& 0.885 \\

\midrule

All oracle-argmax mismatches
& 100
& 100
& 0.234
& 0.444
& 0.905 \\

\quad Near-tie
& 59.1
& 36.5
& 0.145
& 0.485
& 0.865 \\

\quad High-margin
& 40.9
& 63.5
& 0.363
& 0.385
& 0.963 \\

\bottomrule
\end{tabular*}
\end{table*}

\paragraph{Strategy disagreement versus utility loss.}
Table~\ref{tab:routing_error_diagnosis} shows that high-margin mismatches account for \(40.9\%\) of strategy disagreements but contribute \(63.5\%\) of total regret.
Their contribution to the remaining utility loss is therefore substantially larger than their frequency alone suggests.
Conversely, approximately \(28.9\%\) of all argmax mismatches incur zero regret, reflecting utility-equivalent choices rather than suboptimal routing.
Near-tie mismatches nevertheless contribute \(36.5\%\) of total regret:
a small gap between the two best candidates does not bound the loss from selecting a lower-utility candidate.
Thus, exact agreement with a single oracle argmax is insufficient to assess routing quality; the utility consequences of a decision must also be considered.

\paragraph{Local structure of routing errors.}
To examine the feature-space neighborhoods of these decisions, we retrieve the ten nearest training examples at the same retained-token ratio using cosine similarity.
The representation consists of the frozen global preview features and ratio inputs,
\[
[\mathbf{t}_{\mathrm{global}};
 \mathbf{v}_{\mathrm{global}};
 r;
 \ln(r+10^{-8})],
\]
with training-set feature standardization followed by L2 normalization.
The learned cross-attention representation is not included.
Neighbor agreement measures the fraction of retrieved examples sharing the test example's oracle-argmax label, while neighbor entropy measures the entropy of the retrieved oracle-label distribution.

High-margin mismatches have lower neighbor agreement and higher neighbor entropy than oracle-matched decisions.
Importantly, the lower-agreement pattern does not hold for argmax mismatches as a whole, indicating that the association is specific to the high-margin subset rather than strategy disagreement in general.
These results associate high-margin routing errors with locally mixed oracle labels in the frozen global preview space.
They characterize the structure of the remaining errors without establishing a representational limitation of the complete learned cross-modal router.

\end{document}